\documentclass{article} % For LaTeX2e
\usepackage{iclr2027_conference,times}

\usepackage{amsmath,amsfonts,bm}

\def\1{\bm{1}}

\DeclareMathAlphabet{\mathsfit}{\encodingdefault}{\sfdefault}{m}{sl}
\SetMathAlphabet{\mathsfit}{bold}{\encodingdefault}{\sfdefault}{bx}{n}

\usepackage{multirow}
\usepackage{hyperref}
\usepackage{url}
\usepackage{booktabs}
\usepackage[table]{xcolor}
\usepackage{threeparttable}
\usepackage{adjustbox}
\usepackage{algorithm}
\usepackage{tabularx}
\usepackage{makecell}
\usepackage{enumitem}
\usepackage{algpseudocode}
\usepackage{subcaption}
\usepackage[most]{tcolorbox}
\usepackage{xcolor}
\usepackage{amsmath, amssymb, amsthm}
\usepackage{graphicx}

\definecolor{ourblue}{RGB}{205,235,139}

\definecolor{darkgreen}{RGB}{0,130,60}

\hypersetup{
  colorlinks=true,
  citecolor=darkgreen, % 引用保持绿色
  urlcolor=black      % URL / DOI（参考文献里的粉色就是它）
}

\newtheorem{proposition}{Proposition}

\title{Reward-Aligned Reweighting for On-Policy Distillation}

\author{
Haofeng Xu\textsuperscript{1,2}\thanks{Contact: \texttt{haofengxu@connect.hku.hk}.},
Junwei Su\textsuperscript{1,3},
Lansong Diao\textsuperscript{2},
Wenchao Zhou\textsuperscript{2},
Chuan Wu\textsuperscript{1}\thanks{Corresponding author: \texttt{cwu@cs.hku.hk}.}
\\[2mm]
\textsuperscript{1}The University of Hong Kong
\quad
\textsuperscript{2}Alibaba Group
\\
\textsuperscript{3}University of Science and Technology of China
}

\makeatletter
\@ifpackageloaded{amsmath}{}{\usepackage{amsmath}}
\@ifpackageloaded{amssymb}{}{\usepackage{amssymb}}
\@ifpackageloaded{amsthm}{}{\usepackage{amsthm}}
\@ifpackageloaded{natbib}{}{\usepackage{natbib}}
\@ifundefined{proposition}{\newtheorem{proposition}{Proposition}}{}
\makeatother

\input{layout_setup}

\usepackage{wrapfig}
\iclrfinalcopy % Uncomment for camera-ready version, but NOT for submission.
\begin{document}

\maketitle
\lhead{Preprint.} 

\begin{abstract}
On-policy distillation (OPD) has become a standard stage of large language model post-training: a student learns from a stronger teacher on trajectories generated by its own policy. Yet on-policy feedback does not make every teacher correction equally useful. Standard OPD weights token-level distillation terms uniformly, implicitly treating local teacher preference as a proxy for correction utility. A decision's task value, however, depends on how the student completes the subsequent reasoning. This mismatch can cause imitation to suppress viable student strategies or reinforce paths the student cannot reliably execute.
Verified trajectory outcomes provide complementary evidence about continuation quality, but do not directly identify the utility of individual decisions. This motivates using outcome evidence to guide the relative strength of teacher corrections.
We introduce Reward-Aligned Reweighting for On-Policy Distillation (R$^{2}$-OPD), which continuously reallocates teacher supervision using outcome agreement and the magnitude of teacher--student disagreement. It gives reward-aligned corrections greater relative influence while retaining dense feedback, moving beyond uniform imitation and hard filtering.
Our analysis formalizes the mismatch between local teacher preference and student continuation value, and establishes sufficient conditions for reallocation to improve first-order task progress over uniform OPD. 
Across seven mathematical reasoning benchmarks, R$^{2}$-OPD achieves the highest average accuracy among the compared training methods in both cross-size and same-size distillation. It outperforms standard OPD on all seven benchmarks, with average gains of $+3.5$ and $+2.4$ for 1.7B and 4B students, respectively. An extension to code generation yields an average gain of $+1.6$ over standard OPD. Together, these results highlight outcome-guided supervision allocation as an effective means of translating dense teacher feedback into stronger student performance across model scales and task domains. 
% Code is available at \url{https://anonymous.4open.science/r/r2-opd-62F4/}.
\end{abstract}

\section{Introduction}
\label{sec:introduction}

On-policy distillation (OPD) trains a student language model with dense teacher feedback on trajectories generated by the student itself. By querying the teacher at student-visited prefixes, it reduces the mismatch between training contexts and inference-time behavior \citep{agarwal2024gkd,gu2024minillm,lu2025onpolicy}. However, aligning the contexts in which supervision is provided does not ensure that every teacher correction helps the student complete the task. Effective distillation therefore requires considering not only \emph{where} teacher feedback is obtained, but also \emph{how strongly} each correction should influence learning.

Standard sampled-token OPD uses teacher--student log-probability gaps as token-level correction coefficients, without outcome-dependent reweighting. These coefficients encode \emph{local teacher preference}, whereas task success depends on \emph{student continuation value}: the expected reward after a decision when the student, rather than the teacher, generates the remaining response. This distinction gives rise to two potential failure modes. \textbf{Over-correction on successful trajectories:} suppressing teacher-disfavored decisions can erode valid student strategies that differ from the teacher's \citep{yuan2023scaling}. \textbf{Over-trust on failed trajectories:} reinforcing teacher-preferred decisions can encourage reasoning paths that the student cannot reliably complete. In Section~\ref{sec:motivation}, we formalize this preference--value mismatch: moving a local next-token distribution toward the teacher can reduce reverse KL while decreasing expected task reward when the student's continuation policy is held fixed. Closer agreement with the teacher is therefore not, by itself, sufficient evidence of useful supervision.

Verified trajectory outcomes provide complementary evidence about continuation quality, but do not identify the utility of individual decisions. Existing approaches use outcome feedback to combine learning objectives or select teacher supervision \citep{li2026sequential,xu2026sgopd,lin2026opdvr}. However, deciding whether to retain a correction does not resolve how strongly it should influence learning, and hard filtering discards feedback whose utility remains uncertain. This motivates a different use of outcomes: \emph{guiding correction strength rather than treating outcome agreement as a definitive token-value label}. 

We introduce Reward-Aligned Reweighting for On-Policy Distillation (\textbf{R$^{2}$-OPD}), a continuous reweighting method that combines outcome agreement with the magnitude of teacher--student disagreement. It gives reward-aligned corrections greater relative influence while retaining feedback that disagrees with the outcome. The reweighting preserves correction signs and each response's total absolute coefficient mass, reallocating rather than simply reducing teacher supervision. Our analysis establishes sufficient reward--utility agreement conditions under which this reallocation improves first-order task progress relative to uniform OPD.  R$^{2}$-OPD uses outcomes to guide imitation without requiring a separate reinforcement-learning objective or a learned value estimator.

We evaluate R$^{2}$-OPD in same-size and cross-size distillation settings studied in prior OPD work~\citep{lin2026opdvr,yang2026gopd}. Across seven mathematical reasoning benchmarks, R$^{2}$-OPD achieves the highest average accuracy among the compared methods in both settings, with gains of $+3.5$ and $+2.4$ over standard OPD for 1.7B and 4B students, respectively. An extension to code generation yields an average gain of $+1.6$ over standard OPD. Learning curves show early performance gains, while ablations support the joint use of outcome alignment, disagreement magnitude, and coefficient-mass normalization.

\paragraph{Contributions.}
\textbf{(1) Preference--value analysis.}
We formalize the mismatch between local teacher preference and student continuation value, and establish sufficient conditions for supervision reallocation to improve first-order task progress over standard OPD.
\textbf{(2) Reward-aligned supervision allocation.}
We introduce R$^{2}$-OPD, a continuous, magnitude-sensitive reweighting method that uses verified outcomes while preserving nonzero teacher feedback, correction signs, and per-response absolute coefficient mass.
\textbf{(3) Empirical validation across tasks and scales.}
Experiments on seven mathematical reasoning benchmarks and two code-generation benchmarks demonstrate gains over standard OPD, with ablations examining the roles of the key reweighting components.

\section{Preliminaries}
\label{sec:preliminaries}

\subsection{Notation and Distillation Objective}
\label{subsec:kl}
Let $q\sim\mathcal D$ denote a prompt and $o=(o_1,\ldots,o_{|o|})$ a response, with $s_t=(q,o_{<t})$ and $\pi(o\mid q)=\prod_{t=1}^{|o|}\pi(o_t\mid s_t)$. We consider a trainable student $\pi_\theta$ and a frozen teacher $\pi_T$ over a shared finite vocabulary $\mathcal V$. At each iteration, rollouts are generated by a fixed student snapshot $\pi_{\bar\theta}$. For the identities below, we assume a differentiable student, positive teacher and student probabilities, and a bounded generation horizon. Following reverse-KL distillation~\citep{gu2024minillm}, we define
\begin{equation}
\begin{aligned}
K_t(\theta)
&=D_{\mathrm{KL}}\!\left(
\pi_\theta(\cdot\mid s_t)\,\|\,\pi_T(\cdot\mid s_t)
\right)\\
&=\sum_{v\in\mathcal V}\pi_\theta(v\mid s_t)
\log\frac{\pi_\theta(v\mid s_t)}{\pi_T(v\mid s_t)}.
\end{aligned}
\label{eq:state_rkl}
\end{equation}

\subsection{On-Policy Distillation}
\label{subsec:opd}
OPD supervises the student on prefixes visited by its own rollouts~\citep{agarwal2024gkd}. Given $o\sim\pi_{\bar\theta}(\cdot\mid q)$, the statewise reverse-KL objective is
\begin{equation}
\mathcal L_{\mathrm{OPD}}(\theta;\bar\theta)
=
\mathbb E_{\substack{
q\sim\mathcal D\\
o\sim\pi_{\bar\theta}(\cdot\mid q)
}}
\left[
\sum_{t=1}^{|o|}K_t(\theta)
\right].
\label{eq:opd_objective}
\end{equation}
The rollout distribution and sampled prefixes are held fixed when differentiating with respect to $\theta$. Evaluating the full-vocabulary KL requires a vocabulary-wide comparison at every visited prefix, which can be expensive for large vocabularies and long responses. We therefore focus on sampled-token OPD, which uses teacher log-probabilities at student-sampled tokens~\citep{lu2025onpolicy}.
For $o_t\sim\pi_{\bar\theta}(\cdot\mid s_t)$, define the
\emph{distillation advantage}
\begin{equation}
A_t
=
\log\frac{\pi_T(o_t\mid s_t)}
{\pi_{\bar\theta}(o_t\mid s_t)}.
\label{eq:sampled_advantage}
\end{equation}
Conditional on $s_t$, this gives $\mathbb E[-A_t\mid s_t]=K_t(\bar\theta)$, so $-A_t$ is an unbiased single-sample estimate of the local reverse KL at the rollout parameters. For analysis, we use the detached log-probability surrogate
\begin{equation}
\mathcal L_{\mathrm{OPD}}^{\mathrm{sample}}(\theta;\bar\theta)
=
-\mathbb E_{\substack{
q\sim\mathcal D\\
o\sim\pi_{\bar\theta}(\cdot\mid q)
}}
\left[
\sum_{t=1}^{|o|}
\operatorname{sg}[A_t]\log\pi_\theta(o_t\mid s_t)
\right],
\label{eq:opd_surrogate}
\end{equation}
where $\operatorname{sg}[\cdot]$ denotes the stop-gradient
operator. At $\theta=\bar\theta$, its expected gradient matches that of
the fixed-rollout reverse-KL objective:
\begin{equation}
\left.
\nabla_\theta
\mathcal L_{\mathrm{OPD}}^{\mathrm{sample}}(\theta;\bar\theta)
\right|_{\theta=\bar\theta}
=
\left.
\nabla_\theta\mathcal L_{\mathrm{OPD}}(\theta;\bar\theta)
\right|_{\theta=\bar\theta}.
\label{eq:opd_gradient_consistency}
\end{equation}
This is a first-order identity at the rollout parameters, not an equality of loss values or a derivative through the state-visitation distribution. In an individual surrogate term, positive and negative $A_t$ contribute to reinforcing and suppressing the sampled token, respectively, while $|A_t|$ measures the sampled-token log-probability discrepancy. Standard sampled-token OPD uses these coefficients without additional outcome-dependent reweighting: they encode local teacher preference rather than verified trajectory outcomes.

\subsection{Reinforcement Learning with Verifiable Rewards}
\label{subsec:rlvr}
We consider tasks with a binary verifier $R(q,o)\in\{0,1\}$. Reinforcement learning with verifiable rewards (RLVR) seeks to maximize the expected task reward
\begin{equation}
J(\theta)
=
\mathbb E_{\substack{
q\sim\mathcal D\\
o\sim\pi_\theta(\cdot\mid q)
}}
\left[R(q,o)\right].
\label{eq:rlvr_objective}
\end{equation}
Reinforcement learning with verifiable rewards provides a trajectory-level signal indicating whether a sampled response satisfies the task verifier. We encode this outcome as
\begin{equation}
z(q,o)=2R(q,o)-1\in\{-1,+1\},
\label{eq:signed_outcome}
\end{equation}
where $+1$ denotes success and $-1$ denotes failure. The outcome is shared by all tokens in the response and does not identify the utility of any individual decision. This differs from the token-specific distillation signal $A_t$, which measures the teacher's preference for the sampled token at position $t$.
\section{Teacher Preference versus Student Continuation Value}
\label{sec:motivation}

The over-correction and over-trust described in the Introduction raise a question: when does local teacher imitation improve the student's expected task reward? We isolate this question through a local distributional intervention using the distillation advantage $A_t$ defined in Eq.~\eqref{eq:sampled_advantage}.

\paragraph{Local imitation versus student value.}
Fix a visited prefix $s=s_t$. The task value of the next action depends on how the student completes the response:
\begin{equation}
Q_R^{\bar\theta}(s,a)
=
\mathbb E_{o_{>t}\sim\pi_{\bar\theta}(\cdot\mid s,a)}
\left[R(q,o_{<t}a o_{>t})\right].
\label{eq:mot_student_value}
\end{equation}
The continuation follows the rollout student, not the teacher. To isolate the effect of local imitation, we interpolate only the next-action distribution toward the teacher:
\begin{equation}
\pi_\epsilon(a\mid s)
\propto
\pi_{\bar\theta}(a\mid s)^{1-\epsilon}
\pi_T(a\mid s)^\epsilon,
\qquad 0\leq\epsilon\leq1.
\label{eq:mot_local_interpolation}
\end{equation}
The distribution is normalized over actions, while all subsequent decisions remain governed by $\pi_{\bar\theta}$. The continuation values $Q_R^{\bar\theta}(s,a)$ remain fixed along the interpolation.
Let $D_s(\epsilon)$ denote the reverse KL in Eq.~\eqref{eq:state_rkl} evaluated at $\pi_\epsilon(\cdot\mid s)$, and let $F_s(\epsilon)=\mathbb E_{a\sim\pi_\epsilon(\cdot\mid s)} [Q_R^{\bar\theta}(s,a)]$ denote the corresponding conditional expected reward. Moments with subscript $s$ are taken over $o_t\sim\pi_{\bar\theta}(\cdot\mid s)$.

\begin{proposition}[Local imitation need not improve student value]
\label{prop:mot_local_separation}
For a finite action set with positive teacher and student probabilities, the interpolation in Eq.~\eqref{eq:mot_local_interpolation} satisfies
\begin{align}
D_s'(0)
&=-\operatorname{Var}_s[A_t],
\label{eq:mot_kl_derivative}\\
F_s'(0)
&=\operatorname{Cov}_s\!\left(
A_t,Q_R^{\bar\theta}(s,o_t)\right).
\label{eq:mot_value_derivative}
\end{align}
If the covariance is negative, every sufficiently small positive $\epsilon$ strictly decreases both the local reverse KL and the student's conditional expected reward.
\end{proposition}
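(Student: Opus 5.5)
We will treat the interpolation as an exponential family in $\epsilon$. Write $p(a)=\pi_{\bar\theta}(a\mid s)$, $\tau(a)=\pi_T(a\mid s)$, and $h(a)=\log\frac{\tau(a)}{p(a)}$. Then $h(o_t)$ equals $A_t$ at the prefix $s$, and Eq.~\eqref{eq:mot_local_interpolation} becomes
\[
\pi_\epsilon(a)=\frac{p(a)e^{\epsilon h(a)}}{Z(\epsilon)},\qquad Z(\epsilon)=\sum_a p(a)e^{\epsilon h(a)}.
\]
Since the action set is finite and all probabilities are positive, every quantity is a finite sum of smooth functions of $\epsilon$, so differentiation under the sum is immediate. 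The one fact we need is the exponential-family identity
\[
\frac{d}{d\epsilon}\pi_\epsilon(a)=\pi_\epsilon(a)\bigl(h(a)-\mathbb E_{\pi_\epsilon}[h]\bigr),
\]
which follows from $(\log Z)'(\epsilon)=\mathbb E_{\pi_\epsilon}[h]$. Consequently, for any fixed function $g$ of the action, $\frac{d}{d\epsilon}\mathbb E_{\pi_\epsilon}[g]=\operatorname{Cov}_{\pi_\epsilon}(h,g)$. At $\epsilon=0$ we have $\pi_0=p$, so covariances under $\pi_0$ are exactly the moments with subscript $s$ in the statement.

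\textbf{Value derivative.} Apply the covariance identity with $g(a)=Q_R^{\bar\theta}(s,a)$. This is legitimate because the continuation values do not depend on $\epsilon$: only the next-action distribution is interpolated. Evaluating at $\epsilon=0$ gives Eq.~\eqref{eq:mot_value_derivative}.

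\textbf{KL derivative.} Write $D_s(\epsilon)=\mathbb E_{\pi_\epsilon}[\log\pi_\epsilon-\log\tau]$ and substitute $\log\pi_\epsilon-\log\tau=(\epsilon-1)h-\log Z(\epsilon)$. This yields
\[
D_s(\epsilon)=(\epsilon-1)\,\mathbb E_{\pi_\epsilon}[h]-\log Z(\epsilon).
\]
Differentiating gives
\[
D_s'(\epsilon)=\mathbb E_{\pi_\epsilon}[h]+(\epsilon-1)\operatorname{Var}_{\pi_\epsilon}[h]-\mathbb E_{\pi_\epsilon}[h]=(\epsilon-1)\operatorname{Var}_{\pi_\epsilon}[h].
\]
At $\epsilon=0$ this is $-\operatorname{Var}_s[A_t]$, which is Eq.~\eqref{eq:mot_kl_derivative}. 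The step that needs the most care is this substitution: the $\log Z$ term must be tracked so that the two mean terms cancel exactly.

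\textbf{Strict decrease of both quantities.} This is the main subtlety, because first-order signs give strict decrease only if the derivatives are nonzero.
\begin{itemize}
\item \emph{Reward.} Suppose $\operatorname{Cov}_s(A_t,Q_R^{\bar\theta})<0$. Then $F_s'(0)<0$, and since $F_s'$ is continuous, $F_s'(\epsilon)<0$ on some interval $[0,\delta)$. Hence $F_s(\epsilon)<F_s(0)$ for every $\epsilon\in(0,\delta)$.
\item \emph{Reverse KL.} A nonzero covariance forces $h$ to be nonconstant under $p$, and $p$ has full support. So $\operatorname{Var}_{\pi_\epsilon}[h]>0$ for every $\epsilon$, because $\pi_\epsilon$ also has full support. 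Therefore $D_s'(\epsilon)=(\epsilon-1)\operatorname{Var}_{\pi_\epsilon}[h]<0$ on all of $[0,1)$, and $D_s$ decreases strictly on $[0,1]$.
\end{itemize}
Combining the two, every $\epsilon\in(0,\min(\delta,1))$ strictly decreases both the local reverse KL and the conditional expected reward, as the proposition claims.
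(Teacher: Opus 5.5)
Your proof is correct and follows essentially the paper's route: both rest on the tilting identity $\dot\pi_\epsilon(a)=\pi_\epsilon(a)\bigl(h(a)-\mathbb E_{\pi_\epsilon}[h]\bigr)$ at $\epsilon=0$, which turns $F_s'(0)$ into a covariance and $D_s'(0)$ into $-\operatorname{Var}_s[A_t]$. Your closed form $D_s'(\epsilon)=(\epsilon-1)\operatorname{Var}_{\pi_\epsilon}[h]$ is a modest strengthening over the paper's local $O(\epsilon^2)$ Taylor argument, since it shows the reverse KL decreases strictly along the whole path $[0,1]$, not only for small $\epsilon$.
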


The proof is given in Appendix~\ref{app:motivation-proofs}. The proposition separates distributional agreement from task utility: local reverse KL decreases to first order whenever $A_t$ is nonconstant, whereas the reward change depends on how teacher preference covaries with student continuation value. To relate this distinction to the two failure modes, write
\begin{equation}
F_s'(0)=\mathbb E_s\!\bigl[(A_t-\mathbb E_s[A_t])
(Q_R^{\bar\theta}(s,o_t)-F_s(0))\bigr].
\label{eq:mot_value_balance}
\end{equation}
Under this interpolation, the centered preference $A_t-\mathbb E_s[A_t]$ determines the direction of first-order probability change. The value-based forms of over-correction and over-trust are therefore shifts away from actions with above-average student continuation value and toward those with below-average value, respectively, with $F_s(0)$ as the reference. Both contribute negatively to Eq.~\eqref{eq:mot_value_balance}, although contributions from other actions can offset them. This is a local probability-space analysis, not a characterization of an arbitrary parameter-space update.

\paragraph{Implications for supervision allocation.}
Trajectory outcomes provide evidence about student continuation value: under student-generated continuations, $\mathbb E[z(q,o)\mid s_t=s,o_t=a] =2Q_R^{\bar\theta}(s,a)-1$. Although disagreement $z(q,o)A_t<0$ does not establish that an individual correction is harmful, outcome feedback complements local teacher preference with evidence about the student's ability to complete the task. This complementarity motivates R$^{2}$-OPD, which uses reward alignment to guide the relative strength of teacher corrections.
\section{Reward-Aligned Reweighting}
\label{sec:method}

To realize this allocation principle, we combine continuous token-level gating with per-response normalization. The gate adjusts correction strength using reward alignment and disagreement magnitude, while normalization preserves the original absolute coefficient mass within each response.

\subsection{Reward-Aligned Distillation Advantages}
\label{subsec:r2_reweighting}

For a complete response $o$, let $A_t$ and $z(q,o)$ denote the distillation advantage and signed outcome defined in Section~\ref{sec:preliminaries}. A correction is \emph{reward-aligned} if $z(q,o)A_t>0$ and \emph{reward-inconsistent} if $z(q,o)A_t<0$. We combine alignment and disagreement magnitude through a response-local scale and a sigmoid gate:
\begin{equation}
    \nu_o=\max\!\left\{
        \operatorname{median}_{1\leq j\leq |o|}|A_j|,\epsilon
    \right\},\qquad \epsilon>0,
\label{eq:r2_scale}
\end{equation}
\begin{equation}
    g_t=\sigma\!\left(\beta\frac{z(q,o)A_t}{\nu_o}\right),
    \qquad \beta\geq0,
\label{eq:r2_score_gate}
\end{equation}
where $\sigma(u)=(1+e^{-u})^{-1}$. Scaling by $\nu_o$ expresses disagreement relative to its typical within-response magnitude, while $\beta$ controls the sharpness of the weighting. For $\beta>0$, aligned corrections receive $g_t>1/2$ and inconsistent corrections receive $g_t<1/2$. When $\beta|A_t|/\nu_o$ is small, the gate remains close to $1/2$, producing a gradual adjustment. For finite $\beta$, the gate remains strictly positive.

Since gating alone reduces absolute coefficient mass, we normalize each response to preserve $M_o=\sum_t|A_t|$. For $M_o>0$, the \emph{reweighted distillation advantage} is:
\begin{equation}
    Z_o=\frac{M_o}{\sum_j g_j|A_j|},
    \qquad A_t^{\mathrm{R2}}=Z_og_tA_t.
\label{eq:r2_mass_weight}
\end{equation}
The common factor $Z_o$ restores the original coefficient budget without changing the relative gate multipliers across positions.

\subsection{Training Objective}
\label{subsec:r2_update}

Substituting $A_t^{\mathrm{R2}}$ into Eq.~\eqref{eq:opd_surrogate}
gives the token-summed analysis surrogate
\begin{equation}
    \mathcal L_{\mathrm{R2}}(\theta;\bar\theta)
    =-\mathbb E\!\left[
        \sum_t\operatorname{sg}[A_t^{\mathrm{R2}}]\,
        \log\pi_\theta(o_t\mid s_t)
    \right].
\label{eq:r2_loss}
\end{equation}
Advantages and reweighting factors are detached; gradients flow only
through $\log\pi_\theta(o_t\mid s_t)$.
Reward modulates existing token-level contributions without an additional
policy-gradient term.
Algorithm~\ref{alg:r2opd} summarizes reweighting; the implemented loss,
response-wise reduction, and gradient correspondence are detailed in
Appendix~\ref{app:optimization_protocol}.

\begin{algorithm}[t]
\caption{One step of R$^{2}$-OPD.}
\label{alg:r2opd}
\small
\algrenewcommand{\algorithmicindent}{1em}
\algrenewcommand{\algorithmiccomment}[1]{%
    \hfill{\footnotesize\itshape $\triangleright$~#1}}
\algtext*{EndFor}
\algtext*{EndIf}

\begin{algorithmic}[1]
\Require Student $\pi_\theta$, teacher $\pi_T$, prompts $\mathcal D$,
         verifier $R$; sharpness $\beta\geq0$, scale floor $\epsilon>0$.
\State Freeze $\bar\theta\gets\theta$; sample
       $\{q_i\}_{i=1}^{B}\sim\mathcal D$.
\State Sample $o_i\sim\pi_{\bar\theta}(\cdot\mid q_i)$;
       collect $\mathcal B=\{(q_i,o_i)\}_{i=1}^{B}$.
\For{each $(q,o)\in\mathcal B$ with $|o|>0$}
    \State Query $\pi_T$ to compute $A_t$; obtain $z(q,o)$ from $R$.
           \Comment{token-wise teacher feedback}
    \State $A_t^{\mathrm{R2}}\gets A_t$;
           $M_o\gets\sum_t|A_t|$.
           \Comment{vanilla fallback}
    \If{$o$ is complete and $M_o>0$}
        \State Compute $\nu_o$;
               $g_t\gets\sigma(\beta z(q,o)A_t/\nu_o)$.
               \Comment{soft alignment}
        \State $Z_o\gets M_o/(\sum_j g_j|A_j|)$;
               $A_t^{\mathrm{R2}}\gets Z_og_tA_t$.
               \Comment{mass-preserving reweighting}
    \EndIf
\EndFor
\State Update $\theta$ via the OPD surrogate with $\operatorname{sg} [A_t^{\mathrm{R2}}]$.
\end{algorithmic}
\end{algorithm}

\subsection{Structural Properties}
\label{subsec:r2_guarantee}

The multiplicative reweighting admits the following exact decomposition into a teacher-preference component and an outcome-dependent modulation.

\begin{proposition}[Exact teacher--outcome decomposition]
\label{prop:r2_decomposition}
For a complete response with finite advantages, $M_o>0$, and finite $\beta\geq0$,
\begin{equation}
 A_t^{\mathrm{R2}}
 =\underbrace{\frac{Z_o}{2}A_t}_{\substack{\text{Term A}\\
                                      \text{teacher preference}}}
 +\underbrace{\frac{Z_o}{2}z(q,o)|A_t|
    \tanh\!\left(\frac{\beta|A_t|}{2\nu_o}\right)}_{
       \substack{\text{Term B}\\ \text{reward-aligned modulation}}}.
\label{eq:r2_decomposition}
\end{equation}
\end{proposition}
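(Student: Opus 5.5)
The proof is a direct algebraic identity. We rewrite the sigmoid gate through the hyperbolic tangent and then use the fact that the product $z(q,o)A_t$ can be split into a sign and a magnitude. Since $A_t^{\mathrm{R2}}=Z_o g_t A_t$ by Eq.~\eqref{eq:r2_mass_weight}, and $Z_o$ is a common positive finite scalar, it suffices to show
\begin{equation*}
g_tA_t=\tfrac12A_t+\tfrac12 z(q,o)|A_t|\tanh\!\left(\frac{\beta|A_t|}{2\nu_o}\right)
\end{equation*}
for each $t$. Multiplying by $Z_o$ then gives Eq.~\eqref{eq:r2_decomposition}.

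\textbf{Step 1: the scalar identity.} For any real $u$,
\begin{equation*}
\sigma(u)=\tfrac12+\tfrac12\tanh(u/2).
\end{equation*}
To check this, write $\tanh(u/2)=(1-e^{-u})/(1+e^{-u})$ and add one. Apply it with $u=\beta z A_t/\nu_o$, where $z=z(q,o)$, to get
\begin{equation*}
g_tA_t=\tfrac12A_t+\tfrac12A_t\tanh\!\left(\frac{\beta zA_t}{2\nu_o}\right).
\end{equation*}

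\textbf{Step 2: the sign manipulation.} Because $z\in\{-1,+1\}$ and $\tanh$ is odd, $\tanh(zx)=z\tanh(x)$. Writing $A_t=\operatorname{sign}(A_t)|A_t|$ and using oddness once more,
\begin{equation*}
A_t\tanh\!\left(\frac{\beta A_t}{2\nu_o}\right)=|A_t|\tanh\!\left(\frac{\beta|A_t|}{2\nu_o}\right).
\end{equation*}
This holds trivially when $A_t=0$. Combining the two facts, the second term equals $\tfrac12 z|A_t|\tanh(\beta|A_t|/(2\nu_o))$, which is Term B up to the factor $Z_o$.

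\textbf{Step 3: well-definedness.} We confirm that every quantity in the identity is finite and that $Z_o$ is well defined.
\begin{itemize}
\item $\nu_o\ge\epsilon>0$ by Eq.~\eqref{eq:r2_scale}, so the gate argument is finite.
\item For finite $\beta$, each gate satisfies $g_j\in(0,1)$.
\item Since $M_o>0$, at least one $|A_j|>0$, so $\sum_jg_j|A_j|>0$ and $Z_o$ is positive and finite.
\item The case $\beta=0$ is consistent: Term B vanishes and $g_t=\tfrac12$.
\end{itemize}

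The main (minor) obstacle is the sign bookkeeping in Step 2, namely moving $z$ and $\operatorname{sign}(A_t)$ out of the $\tanh$. This is handled cleanly by the oddness of $\tanh$ together with $z^2=1$.
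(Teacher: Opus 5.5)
Your proposal is correct and follows the paper's proof essentially step for step. Both arguments use the identity $\sigma(x)=\tfrac12(1+\tanh(x/2))$ and then the oddness of $\tanh$ to move $z$ and $\operatorname{sgn}(A_t)$ out of the argument, checking the $A_t=0$ case separately; your Step~3 well-definedness check ($\nu_o>0$ and $\sum_j g_j|A_j|>0$) is a harmless addition that the paper instead carries out in its proof of Proposition~\ref{prop:r2_path}.
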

Term A preserves the teacher's signed correction under a common response-level scale. Term B is nonnegative on successful responses and nonpositive on failed responses. Relative to Term A, it attenuates suppression on successful responses and reinforcement on failed responses, while strengthening reward-aligned corrections. This provides a coefficient-level interpretation of how the reweighting addresses over-correction and over-trust. The $\tanh$ factor controls the modulation strength through the normalized disagreement.

For nonzero $A_t$ and finite $\beta$, Term B has strictly smaller magnitude than Term A, so their sum preserves the original coefficient sign. The two terms form a decomposition of a single distillation coefficient. Term B is an outcome-dependent component of this coefficient, rather than a separate or generally unbiased task-advantage estimator. Appendix~\ref{app:r2_decomposition} provides the proof.

We quantify supervision allocation by coefficient mass. For $M_o>0$, define the \emph{reward-inconsistent mass share}
\begin{equation}
    \kappa_o(\beta)=
    \frac{\sum_{t:z(q,o)A_t<0}|A_t^{\mathrm{R2}}|}
         {\sum_t|A_t^{\mathrm{R2}}|}.
\label{eq:r2_diagnostics}
\end{equation}
\begin{proposition}[Mass-preserving reward-aligned allocation]
\label{prop:r2_path}
Fix a complete response with finite advantages and $M_o>0$. For every finite $\beta\geq0$, reweighting retains all nonzero corrections and their signs, with
\begin{equation}
 \sum_t|A_t^{\mathrm{R2}}|=M_o,
 \qquad \kappa_o'(\beta)<0\ \text{if }0<\kappa_o(0)<1.
\label{eq:r2_allocation_guarantee}
\end{equation}
The derivative at zero is right-sided. At $\beta=0$, $A_t^{\mathrm{R2}}=A_t$. If aligned mass is positive, the limit $\beta\to\infty$ is hard reward-aligned filtering renormalized to mass $M_o$.
\end{proposition}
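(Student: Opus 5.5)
The plan is to exploit the fact that every token carries weight $w_t=|A_t^{\mathrm{R2}}|=Z_o g_t|A_t|$. We work only with the positions $t$ where $A_t\neq0$; zero-advantage positions contribute nothing to any sum.

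\textbf{Retention, signs, and mass.} For finite $\beta$ we have $g_t\in(0,1)$. Because $M_o>0$, the denominator $\sum_j g_j|A_j|$ is positive, so $Z_o\in(0,\infty)$ is well defined. Then $A_t^{\mathrm{R2}}=Z_og_tA_t$ is a positive multiple of $A_t$, so every nonzero correction is retained with its sign. Summing absolute values gives $\sum_t|A_t^{\mathrm{R2}}|=Z_o\sum_t g_t|A_t|=M_o$ by the definition in Eq.~\eqref{eq:r2_mass_weight}.

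\textbf{The case $\beta=0$.} Here $g_t=1/2$ for all $t$, so $Z_o=2$ and $A_t^{\mathrm{R2}}=A_t$.

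\textbf{Rewriting $\kappa_o$.} Because $Z_o$ cancels, we can write
\[
\kappa_o(\beta)=\frac{I(\beta)}{I(\beta)+P(\beta)},\qquad I(\beta)=\sum_{t\in\mathcal I}|A_t|\,\sigma(-\beta c_t),\qquad P(\beta)=\sum_{t\in\mathcal P}|A_t|\,\sigma(\beta c_t).
\]
In this formula, $\mathcal I$ and $\mathcal P$ are the inconsistent and aligned index sets, and $c_t=|A_t|/\nu_o>0$. Note that $\nu_o$ does not depend on $\beta$, and on $\mathcal I$ we have $z(q,o)A_t=-|A_t|$.

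\textbf{Sign of the derivative.} We have $I'(\beta)=-\sum_{\mathcal I}|A_t|c_t\sigma'(\cdot)\le0$ and $P'(\beta)=\sum_{\mathcal P}|A_t|c_t\sigma'(\cdot)\ge0$, where $\sigma'>0$. The quotient rule gives
\[
\kappa_o'=\frac{I'P-IP'}{(I+P)^2}.
\]
The hypothesis $0<\kappa_o(0)<1$ means $\mathcal I$ and $\mathcal P$ are both nonempty, and each contains a token with $|A_t|>0$. Then $I,P>0$, $I'<0$ and $P'>0$ strictly for every $\beta\ge0$. Hence $\kappa_o'(\beta)<0$, with the derivative taken right-sided at $\beta=0$.

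This strictness step is the main point requiring care. Since $0<\kappa_o(0)<1$ fixes the index sets independently of $\beta$, the claim holds for all $\beta$, not just at $0$.

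\textbf{The limit $\beta\to\infty$.} For $t\in\mathcal P$, $g_t\to1$; for $t\in\mathcal I$, $g_t\to0$. If the aligned mass $\sum_{\mathcal P}|A_t|$ is positive, the denominator tends to $\sum_{\mathcal P}|A_t|>0$. So $Z_o\to M_o/\sum_{\mathcal P}|A_t|$, and $A_t^{\mathrm{R2}}$ converges to $A_t M_o/\sum_{\mathcal P}|A_j|$ on aligned tokens and to $0$ elsewhere. This is exactly hard reward-aligned filtering renormalized to mass $M_o$.

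The only delicate point remaining is bookkeeping for tokens with $A_t=0$. These tokens belong to neither set and carry zero mass, so we simply exclude them throughout.
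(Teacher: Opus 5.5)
Your proposal is correct and follows the paper's proof essentially step for step. Strict positivity of the finite-$\beta$ gates gives $Z_o\in(0,\infty)$, which yields sign retention and exact mass $M_o$. The common normalizer cancels in $\kappa_o$. The gated aligned and inconsistent group masses have strictly opposite-signed $\beta$-derivatives whenever both groups carry positive mass, which is exactly the condition $0<\kappa_o(0)<1$. The quotient rule and the elementary $\beta=0$ and $\beta\to\infty$ gate limits then finish the argument. Your group-wise parametrization $\sigma(\pm\beta c_t)$ with $c_t=|A_t|/\nu_o$ is the paper's $a_t=zA_t/\nu_o$ and $g_t'=a_tg_t(1-g_t)$, written separately on each group.
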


When both groups carry mass, increasing $\beta$ monotonically shifts a fixed coefficient budget toward reward-aligned feedback. This does not imply monotonic changes for individual token coefficients. As $\beta\to\infty$, the update converges to mass-normalized hard reward-aligned gating. Preserving coefficient mass does not preserve the parameter-gradient norm, since token score vectors may reinforce or cancel one another. The proof and boundary cases are given in Appendix~\ref{app:r2_path}.

\subsection{When Reallocation Improves Task Progress}
\label{subsec:r2_task_progress}

The preceding results describe supervision allocation, but not its task utility. We therefore connect the allocation to first-order progress on the expected reward
$J(\theta)=\mathbb E_{q\sim\mathcal D,\,o\sim\pi_\theta}[R(q,o)]$.
Let $g_R=\nabla J(\bar\theta)$ and
$\psi_t=\left.\nabla_\theta\log\pi_\theta(o_t\mid s_t)
\right|_{\theta=\bar\theta}$.
For a fixed complete response with $M_o>0$, define
\begin{equation}
p_o(t)=\frac{|A_t|}{M_o},
\qquad
u_t=\operatorname{sgn}(A_t)\langle g_R,\psi_t\rangle .
\label{eq:r2_task_utility}
\end{equation}
Here, $p_o(t)$ is the original absolute-coefficient mass share at position $t$, not an action probability, and $u_t$ is the task-gradient projection per unit mass in the teacher-directed score direction. Its sign need not coincide with reward alignment. The corresponding per-response surrogate directions are
$d_{\mathrm V}(o)=\sum_t A_t\psi_t$ and
$d_{\mathrm{R2}}(o)=\sum_t A_t^{\mathrm{R2}}\psi_t$.

\begin{proposition}[Conditional task-progress identity]
\label{prop:r2_task_progress}
For the fixed response and directions above,
\begin{equation}
 \langle g_R,d_{\mathrm{R2}}(o)-d_{\mathrm V}(o)\rangle
 =\frac{M_o\operatorname{Cov}_{p_o}(g,u)}
 {\mathbb E_{p_o}[g]}.
\label{eq:r2_task_gain}
\end{equation}
\end{proposition}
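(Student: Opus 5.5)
The identity is pure algebra on a single fixed response, so the plan is to rewrite both directions in terms of the mass shares $p_o(t)$ and the signed projections $u_t$, then recognize a covariance. Here $g$ denotes the gate $g_t$ viewed as a function of the position $t$, and expectations $\mathbb E_{p_o}$ are taken over $t\sim p_o$. Throughout we use $A_t=\operatorname{sgn}(A_t)|A_t|=M_o\,\operatorname{sgn}(A_t)\,p_o(t)$. Positions with $A_t=0$ contribute nothing to either direction and carry zero $p_o$-mass, so they can be ignored. Since $M_o>0$ is assumed, $p_o$ is a genuine probability distribution on positions.

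\textbf{Step 1 (vanilla projection).} Linearity of the inner product gives
\begin{equation*}
\langle g_R,d_{\mathrm V}(o)\rangle=\sum_t A_t\langle g_R,\psi_t\rangle=M_o\sum_t p_o(t)\,u_t=M_o\,\mathbb E_{p_o}[u].
\end{equation*}

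\textbf{Step 2 (reweighted projection).} By Eq.~\eqref{eq:r2_mass_weight}, $A_t^{\mathrm{R2}}=Z_og_tA_t$, so
\begin{equation*}
\langle g_R,d_{\mathrm{R2}}(o)\rangle=Z_oM_o\sum_t p_o(t)\,g_t\,u_t=Z_oM_o\,\mathbb E_{p_o}[gu].
\end{equation*}
The normalizer simplifies to
\begin{equation*}
Z_o=\frac{M_o}{\sum_j g_j|A_j|}=\frac{1}{\sum_j p_o(j)\,g_j}=\frac{1}{\mathbb E_{p_o}[g]}.
\end{equation*}
This is well defined because $\sigma>0$ for finite $\beta$ and $M_o>0$, so $\mathbb E_{p_o}[g]>0$.

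\textbf{Step 3 (combine).} Subtracting the two projections,
\begin{equation*}
\langle g_R,d_{\mathrm{R2}}-d_{\mathrm V}\rangle
=M_o\left(\frac{\mathbb E_{p_o}[gu]}{\mathbb E_{p_o}[g]}-\mathbb E_{p_o}[u]\right)
=\frac{M_o\bigl(\mathbb E_{p_o}[gu]-\mathbb E_{p_o}[g]\,\mathbb E_{p_o}[u]\bigr)}{\mathbb E_{p_o}[g]}
=\frac{M_o\operatorname{Cov}_{p_o}(g,u)}{\mathbb E_{p_o}[g]},
\end{equation*}
which is Eq.~\eqref{eq:r2_task_gain}.

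The computation itself is routine. The only real obstacle is bookkeeping: the covariance must be taken under the mass distribution $p_o$, not under the action distribution, and this works only because the sign of $A_t$ is absorbed into $u_t$ while its magnitude is absorbed into $p_o$. We should also state explicitly that $\mathbb E_{p_o}[g]>0$, which holds for finite $\beta$ and $M_o>0$, so that the division is legitimate.
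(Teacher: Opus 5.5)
Your proposal is correct and follows the paper's proof essentially step for step: the paper also writes the vanilla projection as $M_o\mathbb E_{p_o}[u]$, identifies $\mathbb E_{p_o}[g]=1/Z_o>0$, writes the reweighted projection as $M_o\mathbb E_{p_o}[gu]/\mathbb E_{p_o}[g]$, and subtracts to obtain the covariance. Your remarks on zero-advantage positions (which carry no $p_o$-mass) and on the positivity of the mean gate for finite $\beta$ match the side conditions the paper itself records.
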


Since $\mathbb E_{p_o}[g]>0$, reweighting improves the response-level first-order task projection over standard OPD if and only if $\operatorname{Cov}_{p_o}(g,u)>0$. Thus, larger gates must be associated with higher task-projection utility under the original mass distribution. Reducing reward-inconsistent mass alone does not guarantee task progress. Neither $g_R$ nor $u_t$ is a training input; both are used only for analysis. Appendix~\ref{app:r2_task_progress} proves the identity and gives a sufficient reward--utility agreement condition. Appendices~\ref{app:r2_finite_step} and \ref{app:r2_compatibility} distinguish this local criterion from finite-step reward ascent and surrogate-gradient compatibility.
\section{Experiments}
\label{sec:experiments}

We assess R$^{2}$-OPD across student sizes and task domains, with controlled ablations to examine the role of reward-aligned supervision allocation.

\subsection{Experimental Settings}
\label{subsec:exp_setup}

\paragraph{Models and training data.}
We initialize students from Qwen3-4B-Base and Qwen3-1.7B-Base~\citep{yang2025qwen3}. The teacher, denoted Qwen3-4B-GRPO, is obtained by applying GRPO~\citep{shao2024deepseekmath} to Qwen3-4B-Non-Thinking. The same frozen checkpoint is shared across all mathematical distillation methods and student sizes. Both teacher RL and student OPD use DAPO-Math-17k~\citep{yu2025dapo}. We use a fixed $\beta=0.001$ across all R$^{2}$-OPD experiments, without experiment-specific tuning. Full training details are provided in Appendix~\ref{app:exp_reproducibility}.

\paragraph{Evaluation.}
We report $\mathrm{avg}@16$ on AIME24/25~\citep{maaaime}, and $\mathrm{avg}@4$ on AMC23/24~\citep{maaamc}, MATH500~\citep{hendrycks2021math,lightman2023lets}, MinervaMath~\citep{lewkowycz2022minerva}, and OlympiadBench~\citep{he2024olympiadbench}. For the main 1.7B cross-size mathematical comparison, each method uses three independent training seeds with the same initial student and fixed teacher checkpoints, and results are reported as mean $\pm$ sample standard deviation. Given the computational budget, all remaining experiments use one training run per method or variant. The two initial students and fixed teacher are each evaluated once under the same benchmark-specific sampling protocol. Evaluation details are provided in Appendix~\ref{app:exp_evaluation}.

\paragraph{Baselines.}
We compare sampled-token OPD~\citep{lu2025onpolicy}, OPDVR~\citep{lin2026opdvr}, SG-OPD~\citep{xu2026sgopd}, and ExOPD~\citep{yang2026gopd}. We also include Top-64 OPD, our top-$k$ implementation control of reverse-KL distillation~\citep{gu2024minillm}, and OPD + GRPO, an additive GRPO--OPD control~\citep{shao2024deepseekmath,lu2025onpolicy}. The initial students and fixed teacher are reference models, not additional trained baselines.

\begin{figure}[!t]
    \centering
    \includegraphics[width=\linewidth]
        {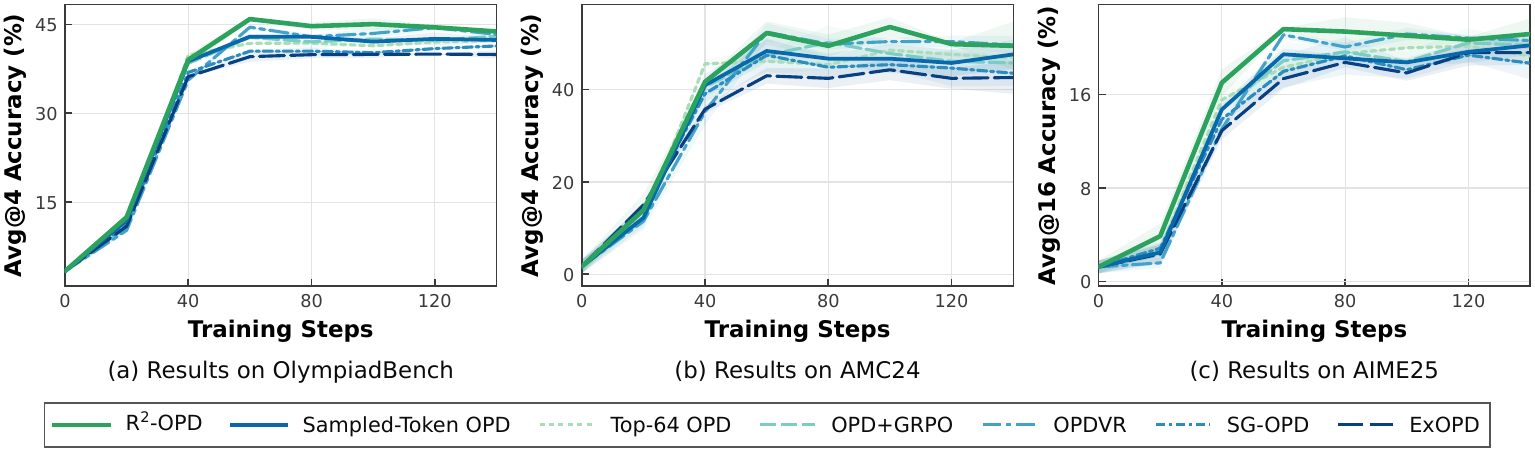}
    \caption{
        Evaluation accuracy on OlympiadBench, AMC24, and AIME25
        during cross-size distillation from Qwen3-4B-GRPO to
        Qwen3-1.7B-Base.
        Curves show the mean, and shaded bands indicate
        $\pm$ one sample standard deviation across three
        independent training seeds.
    }
    \label{fig:main_results_accuracy_17b}
\end{figure}

\begin{table*}[!t]
\centering
\caption{Accuracy (\%) on mathematical reasoning benchmarks under
cross-size and same-size distillation.
Within each distillation setting, \textbf{bold} and
\underline{underlined} scores indicate the best and second-best
results in each column, respectively.}
\label{tab:main_results}

\small
\setlength{\tabcolsep}{2.5pt}
\setlength{\arrayrulewidth}{0.35pt}
\renewcommand{\arraystretch}{1.05}

\begin{adjustbox}{max width=\textwidth}
\begin{tabular}{@{}l|*{7}{c}|c@{}}
\toprule
Method
& AIME24
& AIME25
& AMC23
& AMC24
& MATH500
& Minerva
& Olympiad
& Avg. \\
\midrule

% ================= Reference models =================
\multicolumn{9}{c}{
    \textit{Reference models}
} \\
\midrule

Qwen3-1.7B-Base\,{\scriptsize (Student)}
& 1.9 & 1.7 & 4.4 & 2.2
& 5.8 & 3.1 & 3.7 & 3.3 \\

Qwen3-4B-Base\,{\scriptsize (Student)}
& 3.1 & 1.4 & 11.9 & 3.3
& 8.4 & 4.4 & 4.8 & 5.3 \\

Qwen3-4B-GRPO\,{\scriptsize (Teacher)}
& 30.1 & 26.5 & 62.5 & 51.1
& 77.2 & 31.0 & 46.1 & 46.4 \\
% ============ Cross-size distillation ============
\midrule
\multicolumn{9}{c}{
    \textit{Qwen3-4B-GRPO $\rightarrow$ Qwen3-1.7B-Base}
} \\
\midrule

Sampled-Token OPD
& $25.5_{\pm 1.1}$ & $20.2_{\pm 0.8}$ & $60.0_{\pm 2.7}$
& $48.3_{\pm 2.4}$ & $74.7_{\pm 0.2}$ & $27.6_{\pm 0.9}$
& $42.9_{\pm 0.5}$ & $42.7_{\pm 0.6}$ \\

Top-64 OPD
& $25.0_{\pm 1.3}$ & $20.1_{\pm 0.9}$ & $60.6_{\pm 1.0}$
& $48.5_{\pm 2.7}$ & $74.5_{\pm 1.2}$ & $27.1_{\pm 0.4}$
& $42.3_{\pm 0.0}$ & $42.6_{\pm 0.4}$ \\

OPD + GRPO
& $26.1_{\pm 1.0}$ & \underline{$21.2_{\pm 1.2}$}
& $60.6_{\pm 3.4}$ & $50.2_{\pm 3.6}$ & $75.1_{\pm 0.3}$
& $\bm{30.6_{\pm 0.6}}$ & $43.0_{\pm 0.4}$ & $43.8_{\pm 0.9}$ \\

OPDVR
& $\bm{29.8_{\pm 1.0}}$ & $20.4_{\pm 0.5}$
& $\bm{63.3_{\pm 3.0}}$ & \underline{$52.4_{\pm 2.0}$}
& \underline{$76.3_{\pm 1.3}$} & $28.0_{\pm 1.0}$
& \underline{$44.5_{\pm 0.3}$} & \underline{$45.0_{\pm 0.3}$} \\

SG-OPD
& $23.6_{\pm 0.2}$ & $19.4_{\pm 0.6}$ & $58.4_{\pm 2.4}$
& $47.4_{\pm 3.6}$ & $73.7_{\pm 0.2}$ & $27.8_{\pm 0.9}$
& $41.4_{\pm 0.3}$ & $41.7_{\pm 0.2}$ \\

ExOPD
& $24.9_{\pm 1.1}$ & $19.6_{\pm 1.1}$ & $59.0_{\pm 3.4}$
& $44.3_{\pm 2.2}$ & $71.8_{\pm 0.2}$ & $27.9_{\pm 0.1}$
& $40.0_{\pm 0.0}$ & $41.1_{\pm 0.1}$ \\

\rowcolor{ourblue}
\textbf{R$^{2}$-OPD}
& \underline{$29.4_{\pm 0.8}$}
& $\bm{24.6_{\pm 0.9}}$
& \underline{$63.0_{\pm 2.0}$}
& $\bm{53.5_{\pm 2.1}}$
& $\bm{77.6_{\pm 0.8}}$
& \underline{$29.1_{\pm 0.6}$}
& $\bm{45.9_{\pm 0.3}}$
& $\bm{46.2_{\pm 0.4}}$ \\

% ============ Same-size distillation ============
\midrule
\multicolumn{9}{c}{
    \textit{Qwen3-4B-GRPO $\rightarrow$ Qwen3-4B-Base}
} \\
\midrule

Sampled-Token OPD
& 32.9 & 26.8 & 65.5 & 53.8 & 80.6 & 32.1 & 47.3 & 48.4 \\

Top-64 OPD
& 32.6 & 27.0 & 65.2 & 53.2 & 80.1 & 31.6 & 46.9 & 48.1 \\

OPD + GRPO
& 33.3 & 26.5 & 66.5 & \underline{57.3} & 82.3 & 33.1 & 48.1 & 49.6 \\

OPDVR
& 33.5 & 26.9 & 67.6 & 55.2 & \textbf{84.6} & 32.5 & \underline{48.4} & 49.8 \\

SG-OPD
& 33.1 & 26.6 & 67.9 & 54.9 & 81.8 & 31.3 & 47.6 & 49.0 \\

ExOPD
& \underline{34.7} & \underline{27.6} & \textbf{68.9}
& 54.5 & 82.1 & \textbf{33.9} & 47.5 & \underline{49.9} \\

\rowcolor{ourblue}
\textbf{R$^{2}$-OPD}
& \textbf{35.1} & \textbf{28.1} & \underline{68.3}
& \textbf{57.8} & \underline{83.6} & \underline{33.5}
& \textbf{48.9} & \textbf{50.8} \\

\bottomrule
\end{tabular}
\end{adjustbox}
\end{table*}

\subsection{Main Results}
\label{subsec:exp_main_results}

As shown in Table~\ref{tab:main_results}, R$^{2}$-OPD achieves the highest average accuracy in both distillation settings and outperforms standard OPD on all seven benchmarks. It reaches $46.2\pm0.4\%$ with the 1.7B student and $50.8\%$ with the 4B student, improving over standard OPD with average gains of $+3.5$ and $+2.4$, respectively. It also surpasses the strongest competing baselines, OPDVR and ExOPD, by $+1.2$ and $+0.9$, respectively. The gains are especially pronounced on challenging competition-mathematics benchmarks, reaching $+4.4$ on AIME25, $+5.2$ on AMC24, and $+3.0$ on OlympiadBench in the 1.7B setting. 
Figure~\ref{fig:main_results_accuracy_17b} shows that these gains emerge early: a clear mean-accuracy advantage on AIME25 is already visible around step~40. Across all plotted benchmarks, R$^{2}$-OPD approaches near-peak accuracy by roughly step~60 and generally maintains higher mean accuracy thereafter, indicating both faster early learning and sustained
performance gains.

% Replace BOTH existing subsections, including their minipage blocks.
% The surrounding document must be at normal text width.
\par
\Needspace{18\baselineskip}
\subsection{Extension to Code Generation}
\label{subsec:exp_coding}

\begingroup
\setlength{\intextsep}{2pt}
\setlength{\columnsep}{1em}
\setlength{\parindent}{0pt}
% True wrapping table. Do not place this file inside a minipage or table.
\begin{wraptable}{r}{0.50\columnwidth}
\vspace{-6pt} % 新增：上移 caption 和表体
\centering
\setlength{\abovecaptionskip}{0pt}
\setlength{\belowcaptionskip}{4pt}
\caption{Code-generation $\mathrm{avg}@4$ (\%) under cross-size
 distillation from Qwen3-4B-GRPO to Qwen3-1.7B-Base.
 \textbf{Bold} and \underline{underlined} scores indicate the best
 and second-best results among student-training methods.}
\label{tab:coding_results_17b}
\scriptsize
\setlength{\tabcolsep}{1.5pt}
\renewcommand{\arraystretch}{1.15}
\renewcommand{\tabularxcolumn}[1]{m{#1}}
\begin{tabularx}{\linewidth}{@{}l*{2}{>{\centering\arraybackslash}X}c@{}}
\toprule
\textbf{Method} & \textbf{HumanEval+} & \textbf{LiveCodeBench} & \textbf{Avg.} \\
\midrule
Student & 64.0 & 21.1 & 42.6 \\
Teacher & 78.0 & 46.5 & 62.3 \\
\midrule
Vanilla OPD & 75.4 & 43.7 & 59.6 \\
OPDVR & \underline{76.2} & \underline{44.2} & \underline{60.2} \\
\midrule
\rowcolor{ourblue}
\textbf{R$^{2}$-OPD} & \textbf{77.3} & \textbf{45.1} & \textbf{61.2} \\
\bottomrule
\end{tabularx}
\end{wraptable}

% Start prose directly: do not insert \noindent here.
We further evaluate R$^{2}$-OPD on code generation by distilling an separate RL-trained Qwen3-4B-Non-Thinking teacher into a Qwen3-1.7B-Base student. Both teacher RL and student OPD use TACO~\citep{li2023taco}, and we report $\mathrm{avg}@4$ on HumanEval+~\citep{liu2023evalplus} and LiveCodeBench~\citep{jain2025livecodebench}. As shown in Table~\ref{tab:coding_results_17b}, R$^{2}$-OPD outperforms vanilla OPD and OPDVR~\citep{lin2026opdvr} on both benchmarks, achieving an average $\mathrm{avg}@4$ of $61.2\%$ with gains of $+1.6$ and $+1.0$, respectively. These results extend the observed benefits beyond mathematical reasoning.
\RtwoEndWrap
\endgroup

\Needspace{16\baselineskip}
\subsection{Ablation Studies}
\label{subsec:exp_ablations}
\label{subsec:exp_mechanism}

\begingroup
\setlength{\intextsep}{2pt}
\setlength{\columnsep}{1em}
\setlength{\parindent}{0pt}
% True wrapping table. Do not place this file inside a minipage or table.
\begin{wraptable}{r}{0.50\columnwidth}
\vspace{-6pt} % 新增：上移 caption 和表体
\centering
\setlength{\abovecaptionskip}{0pt}
\setlength{\belowcaptionskip}{4pt}
\caption{Accuracy (\%) for ablation studies and GRPO composition.
$\Delta_{\mathrm{avg}}$ denotes the change in mean accuracy across
 the three benchmarks relative to full R$^{2}$-OPD.}
\label{tab:ablation_results}
\label{tab:mechanism_validation}
\scriptsize
\setlength{\tabcolsep}{1pt}
\renewcommand{\arraystretch}{1.12}
\renewcommand{\tabularxcolumn}[1]{m{#1}}
\begin{tabularx}{\linewidth}{
 @{}>{\raggedright\arraybackslash}p{0.32\linewidth}
 *{3}{>{\centering\arraybackslash}X}c@{}
}
\toprule
\textbf{Variant} & \textbf{AIME24} & \textbf{MATH500}
 & \textbf{Minerva} & $\Delta_{\mathrm{avg}}$ \\
\midrule
\rowcolor{ourblue}
\textbf{R$^{2}$-OPD (full)} & 30.1 & 77.8 & 29.4 & -- \\
\midrule
Reversed alignment & 25.4 & 73.5 & 26.9 & $-3.8$ \\
Outcome-permuted & 27.9 & 75.9 & 28.5 & $-1.7$ \\
Sign-only gate & 27.3 & 76.6 & 28.7 & $-1.6$ \\
Magnitude-only gate & 26.1 & 75.8 & 28.2 & $-2.4$ \\
No mass-norm & 28.1 & 76.7 & 29.8 & $-0.9$ \\
\midrule
R$^{2}$-OPD + GRPO & 31.5 & 78.0 & 30.2 & $+0.8$ \\
\bottomrule
\end{tabularx}
\end{wraptable}

Table~\ref{tab:ablation_results} examines reward alignment, disagreement magnitude, and per-response coefficient-mass normalization, using one training run per variant. Reversed alignment negates $z(q,o)A_t$ in the gate without changing coefficient signs, yielding the largest average accuracy drop of $3.8$. The sharpness sweep retains strong performance across the tested values from $10^{-4}$ to $10^{-1}$ (Appendix~\ref{app:exp_sensitivity}). Together, these observations support reward-consistent allocation without requiring narrowly tuned gate sharpness. The outcome-permuted control shuffles labels across complete responses while preserving per-response mass and the marginal outcome distribution. Its $1.7$ drop supports the value of correctly matched outcome feedback beyond mass preservation. The sign-only gate removes magnitude dependence while retaining the original advantages, reducing average accuracy by $1.6$; the magnitude-only gate removes outcome alignment and incurs a larger decline of $2.4$. These controls favor the full outcome- and magnitude-dependent configuration under the shared settings. Removing mass normalization while retaining the response-local scale reduces accuracy by $0.9$ points, supporting normalization in this configuration. Adding GRPO improves all three benchmarks with an average gain of $+0.8$, suggesting complementarity with direct reward optimization. Control definitions and the $\beta$ sensitivity analysis are provided in Appendices~\ref{app:exp_ablations} and~\ref{app:exp_sensitivity}, respectively.
\RtwoEndWrap
\endgroup

\begin{figure}[!tbp]
    \centering
    \includegraphics[width=0.82\linewidth]
        {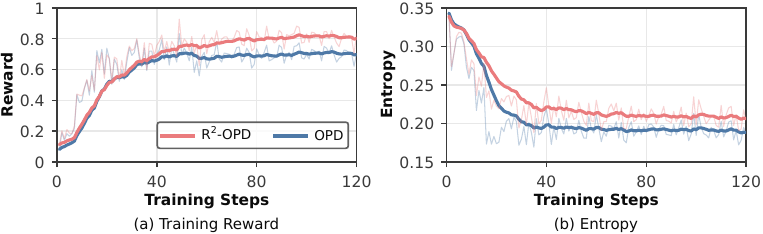}
    \caption{
        Training reward and student policy entropy
        for R$^{2}$-OPD and standard OPD.
        Thin lines show raw observations;
        thick lines show smoothed trends.
    }
    \label{fig:reward_entropy_training_dynamic}
\end{figure}

\subsection{Training Dynamics}
\label{subsec:exp_training_dynamics}

To examine how reward-aligned supervision allocation affects the distillation process, we track training reward and student policy entropy. Figure~\ref{fig:reward_entropy_training_dynamic} compares R$^{2}$-OPD with standard sampled-token OPD, denoted as OPD in the figure. Both methods improve training reward, but R$^{2}$-OPD maintains a higher smoothed reward trajectory over most of the plotted interval, with a separation that persists into late training. The raw observations fluctuate and overlap, so this comparison describes the overall trend rather than an advantage at every update.
Policy entropy decreases for both methods, while R$^{2}$-OPD retains higher entropy during the middle and later stages. Thus, its higher reward trajectory coexists with higher policy entropy, rather than a greater reduction in entropy.

\section{Related Work}

\paragraph{On-policy distillation of language models.}
Knowledge distillation transfers a teacher distribution into a smaller student \citep{hinton2015distilling}. For autoregressive language models, MiniLLM adopts the mode-seeking reverse KL \citep{gu2024minillm}, GKD reduces train--test mismatch by querying the teacher on student-generated prefixes \citep{agarwal2024gkd}, and DistiLLM studies divergence choice and adaptive reuse of student rollouts \citep{ko2024distillm}. A common scalable formulation of on-policy distillation casts the detached teacher--student log-probability gap as a per-token reward in a policy-gradient-style objective \citep{lu2025onpolicy}. Such objectives have been adopted in large-scale reasoning post-training \citep{yang2025qwen3,xiao2026mimo,deepseekai2026v4}; see \citet{song2026survey} for a broader survey. Recent work studies OPD optimization and failure modes \citep{li2026rethinking,fu2026revisiting}, improves sampled-token estimation or target shaping \citep{oh2026vopd,jang2026stable}, and rebalances training data \citep{hou2026uniopd}. G-OPD generalizes OPD as dense KL-constrained RL; its ExOPD variant amplifies teacher--reference log-ratio rewards relative to KL regularization~\citep{yang2026gopd}.

\paragraph{Selective token supervision in OPD.}
Complementary work asks whether OPD should supervise every visited position equally. TSD-KD applies direct distribution matching to selected tokens and weaker preference feedback elsewhere \citep{kim2026tsdkd}. Other methods select or reweight tokens using teacher--student disagreement, uncertainty, or support compatibility \citep{xu2026tip,wang2026teachability,xing2026tropd,jin2026entropyaware}; asymmetric or hierarchical schemes distinguish positive- and negative-advantage regions or combine trajectory filtering with token weighting \citep{jia2026aopd,li2026fireopd}. EGRSD incorporates outcome information by coupling reward-grounded update directions with teacher-derived magnitudes and confidence gating \citep{ke2026respecting}.

\paragraph{Outcome-guided distillation and RLVR.}
RLVR complements dense OPD feedback with outcome-level rewards~\citep{shao2024deepseekmath,guo2025deepseekr1}. Existing methods combine or sequence these objectives~\citep{xu2025kdrl,xiao2026mimo,li2026sequential}, route samples between them~\citep{li2026srpo}, or incorporate teacher information and verifier feedback into RL and self-distillation~\citep{wang2026distilledrl,yang2026rlsdvr,cai2026h2sd}. Closely related, SG-OPD combines token-level sign-consistency routing with phased sampling of verified teacher responses~\citep{xu2026sgopd}; and OPDVR masks outcome-inconsistent corrections~\citep{lin2026opdvr}. Related diagnostics examine outcome-dependent local supervision~\citep{ma2026outcome}. R$^{2}$-OPD continuously reweights the original distillation coefficients using outcome alignment and disagreement magnitude.

\section{Conclusion}
\label{sec:conclusion}

We revisit uniform supervision in on-policy distillation through the mismatch between \emph{local teacher preference} and \emph{student continuation value}. Our analysis shows that local imitation can reduce reverse KL while lowering expected reward under a fixed student continuation policy, providing a unified account of over-correction and over-trust.
This motivates R$^{2}$-OPD, which uses trajectory outcomes as evidence for supervision allocation and continuously reweights teacher corrections using reward alignment and disagreement magnitude.
The method preserves dense feedback, correction signs, and per-response absolute coefficient mass. We establish sufficient reward--utility agreement conditions under which reallocation improves first-order task progress over uniform OPD.
Experiments across student sizes and task domains demonstrate consistent gains over standard OPD in mathematical reasoning and code generation, while ablations support the core design.
Together, these findings highlight that effective OPD depends not only on \emph{where} teacher feedback is collected, but also on \emph{how its influence is allocated} using evidence of student task success.

% Unnumbered statements after the main text and before references.

\section*{Reproducibility Statement}

Section~\ref{sec:method} and Algorithm~\ref{alg:r2opd} specify the reweighting rule and the training procedure; Appendix~\ref{app:exp_reproducibility} reports the training configurations, computing environment, and evaluation protocols, and Appendix~\ref{app:exp_ablations} defines the ablations. Assumptions and derivations appear in Appendices~\ref{app:proofs} and~\ref{app:motivation-proofs}, and the task-progress analysis in Appendix~\ref{app:r2_task_progress}. 
% A reference implementation---the R$^{2}$-OPD coefficient, its ablations, the outcome-permuted control, the compared baselines, and the tests that pin them---is available at \url{https://anonymous.4open.science/r/r2-opd-62F4/}; it is extracted verbatim from the code that produced our runs, checked against it numerically, and depends only on PyTorch.
\section*{AI Use Statement}

Large language models assisted with literature search, manuscript
drafting and editing, checks of mathematical derivations, and \LaTeX{}
formatting. The authors designed and conducted the experiments,
reviewed the AI-assisted content, and take full responsibility for the
methodology, theoretical claims, empirical results, and final manuscript.
\clearpage

% References.
\bibliography{iclr2027_conference}
\bibliographystyle{iclr2027_conference}

% Appendices.
\clearpage
\appendix
% Replaces the attached appendix_a_impletation.tex.
% Input once after \appendix; do not add another surrounding \section.
\section{Implementation Details}
\label{app:exp_reproducibility}

\begin{table}[H]
\centering
\begin{minipage}{0.50\linewidth}
\centering
\caption{Training settings and mathematical evaluation.}
\label{tab:exp_hyperparameters}

\small
\setlength{\tabcolsep}{3pt}
\renewcommand{\arraystretch}{1.10}
\begin{tabularx}{\linewidth}{
    @{}>{\raggedright\arraybackslash}Xr@{}
}
\toprule
\textbf{Parameter} & \textbf{Value} \\
\midrule
\multicolumn{2}{c}{\emph{Optimization}} \\
\midrule
Optimizer & AdamW \\
Learning rate ($\eta$) & $1\times10^{-6}$ \\
Schedule / warmup & Constant / none \\
AdamW $(\beta_1,\beta_2)$ & $(0.9,\,0.99)$ \\
AdamW $\epsilon$ & $10^{-8}$ \\
Weight decay & $0.1$ \\
Global $\ell_2$-norm clip & $1.0$ \\
Accumulation and all-reduce precision & FP32 \\

\midrule
\multicolumn{2}{c}{\emph{Distillation loss}} \\
\midrule
Clipping $(\epsilon_{\mathrm{low}},\epsilon_{\mathrm{high}})$
& $(0.2,\,0.2)$ \\
OPD multiplier ($c_{\mathrm{OPD}}$) & $1$ \\
Scale floor ($\epsilon$) & $2^{-23}$ \\
Dual clipping & Disabled \\
Entropy / reference-KL coefficients & $0\,/\,0$ \\

\midrule
\multicolumn{2}{c}{\emph{Training and rollout}} \\
\midrule
Prompts per rollout batch & 64 \\
Responses per prompt & 4 \\
Optimizer batch (responses) & 256 \\
Micro-batch tokens per device & $\leq10{,}240$ \\
Rollout temperature & $1.0$ \\
Rollout top-$p$ / top-$k$ & $1$ / disabled \\
Maximum prompt length & 2,048 tokens \\
Maximum response length & 8,192 tokens \\

\midrule
\multicolumn{2}{c}{\emph{Mathematical evaluation}} \\
\midrule
Samples per AIME24/25 problem & 16 \\
Samples per other problem & 4 \\
Temperature & $1.0$ \\
Top-$p$ & $0.95$ \\
\bottomrule
\end{tabularx}
\end{minipage}
\end{table}

\subsection{Training Configuration}
\label{app:exp_configuration}

Mathematical reasoning and code generation share the optimization
and rollout settings in Table~\ref{tab:exp_hyperparameters}.
Unless otherwise specified, controlled comparisons use the same
prompt budget.

\paragraph{Hardware and software.}
All experiments use a single machine with eight NVIDIA H800-80GB
GPUs interconnected via 400\,GB/s NVLink.
The software environment comprises Ubuntu~22.04.5, Python~3.11.15,
PyTorch~2.11.0, CUDA~12.9, NCCL~2.26.2, Ray~2.55.1,
SGLang~0.5.12.post1, Transformers~5.6.0, and
FlashAttention~2.7.4.post1.

\subsection{Policy Loss and Optimization Protocol}
\label{app:optimization_protocol}

\paragraph{Policy loss and reduction.}
For a batch $\mathcal B$ of $N$ responses, let $m_{i,t}$ mark valid
response tokens, $n_i=\sum_t m_{i,t}$,
$\widetilde n_i=\max\{n_i,1\}$, and
$\widehat A_{i,t}=\operatorname{sg}[c_{\mathrm{OPD}}A_{i,t}^{\mathrm{R2}}]$.
With a detached snapshot denominator, define
$\rho_{i,t}(\theta)=\pi_\theta(o_{i,t}\mid s_{i,t})/
\pi_{\bar\theta}(o_{i,t}\mid s_{i,t})$.
The implemented loss is
\begin{equation}
\begin{aligned}
 \widehat{\mathcal L}_{\mathrm{impl}}(\theta;\mathcal B)
 &=\frac{1}{N}\sum_{i=1}^{N}\frac{1}{\widetilde n_i}
   \sum_t m_{i,t}\,
   \ell_{\mathrm{clip}}(\rho_{i,t}(\theta),\widehat A_{i,t}),\\
 \ell_{\mathrm{clip}}(\rho,a)
 &=-\min\!\left\{\rho a,
 \operatorname{clip}(\rho,1-\epsilon_{\mathrm{low}},
 1+\epsilon_{\mathrm{high}})a\right\},
\end{aligned}
\label{eq:impl_actual_policy_loss}
\end{equation}
with numerical settings in Table~\ref{tab:exp_hyperparameters}.
This averages per-response token means rather than pooling tokens
across responses. Masks include generated tokens and terminal EOS
when present, exclude prompts and padding, and retain truncated responses;
empty masks contribute zero.
Distributed accumulation preserves the weights $1/(N\widetilde n_i)$.
No additional advantage whitening, centering, coefficient clipping, or
weight cap is applied. Truncated responses retain standard OPD
coefficients, and zero-mass responses retain zero coefficients.
Pure distillation uses outcomes only through the gate, without an
additional reward-derived advantage.

\paragraph{Rollout and update schedule.}
Each fresh rollout batch forms one optimizer batch, with gradients
accumulated across token-packed micro-batches before a single optimizer
step, without replay or additional optimization epochs.
Student weights are synchronized before sampling and remain unchanged
until the update.
The trainer recomputes detached snapshot log probabilities for both
$A_{i,t}$ and the ratio denominator; teacher scores and reweighting
coefficients remain fixed throughout accumulation.
Engine--trainer log-probability discrepancies are logged without
importance-sampling correction.
With consistent trainer evaluations, $\rho_{i,t}(\bar\theta)=1$,
so clipping is inactive and the policy-loss gradient matches the detached
log-probability surrogate with the same response-wise reduction.
Its relation to the token-summed analysis is given in
Appendix~\ref{app:r2_policy_loss}.

\subsection{Evaluation and Reporting} \label{app:exp_evaluation} 
Using the benchmark-specific sample count $k_{\mathcal E}$ and sampling settings in Table~\ref{tab:exp_hyperparameters}, we report \begin{equation} \mathrm{Acc}^{(s)}(\mathcal E) =\frac{100}{|\mathcal E|k_{\mathcal E}} \sum_{q\in\mathcal E}\sum_{j=1}^{k_{\mathcal E}} R(q,o_{q,j}^{(s)}). \label{eq:exp_avgk} \end{equation} For the main mathematical comparison, we report mean $\pm$ sample standard deviation across three training seeds for 1.7B cross-size distillation and single-run results for 4B same-size distillation. Reference checkpoints are evaluated once. The seven-benchmark macro-average is computed within each run before cross-seed aggregation.

\section{Ablation Experiments}
\label{app:exp_ablations}

Unless otherwise specified, the controls in
Table~\ref{tab:ablation_results} use the same training and
evaluation settings as full R$^{2}$-OPD.

\subsection{Ablation Definition}
\label{app:def_ablation}

\paragraph{Response-level normalization.}
For a complete response with $M_o=\sum_t|A_t|>0$, each modified
gate $g_t^{(v)}$ uses a recomputed normalizer:
\begin{equation}
 Z_o^{(v)}=\frac{M_o}{\sum_j g_j^{(v)}|A_j|},
 \qquad A_t^{(v)}=Z_o^{(v)}g_t^{(v)}A_t.
\label{eq:app_ablation_variant_coeff}
\end{equation}
This preserves coefficient mass under each gate modification;
no mass-norm is the exception.
All advantages and reweighting factors are detached.
Zero-mass responses retain zero coefficients with $Z_o=1$;
truncated responses retain standard OPD coefficients.
Below, $z=z(q,o)$ and $\nu_o$ follows Eq.~\eqref{eq:r2_scale}.

\paragraph{Reversed alignment.}
Negating the alignment signal gives
\begin{equation}
 g_t^{\mathrm{rev}}
 =\sigma\!\left(-\beta\frac{zA_t}{\nu_o}\right).
\label{eq:app_ablation_inverse}
\end{equation}
This reverses the gate's relative emphasis on aligned and
inconsistent corrections without changing $A_t$ or verifier labels.

\paragraph{Outcome-permuted.}
Within each rollout batch, we permute outcome labels across complete
responses while retaining their advantages, scales, and token masks.
For a permutation $\varpi$ of these responses, the gate is
\begin{equation}
 g_{i,t}^{\mathrm{perm}}
 =\sigma\!\left(
 \beta\frac{z_{\varpi(i)}A_{i,t}}{\nu_{o_i}}
 \right).
\label{eq:app_ablation_outcome_permuted}
\end{equation}
The normalizer is recomputed using
Eq.~\eqref{eq:app_ablation_variant_coeff}, preserving per-response
coefficient mass and the marginal outcome distribution.
Truncated responses retain standard OPD coefficients.

\paragraph{Sign-only gate.}
We remove magnitude dependence from the gate by assigning one fixed level per
alignment group:
\begin{equation}
g^{\mathrm{sign}}_t =
\begin{cases}
3/4, & z A_t > 0,\\
1/4, & z A_t < 0,\\
1/2, & z A_t = 0,
\end{cases}
\end{equation}
so the gate reads neither $\beta$ nor $\nu_o$, while the original $A_t$ remains in
Eq.~\eqref{eq:ablation_norm}. Let $a$ denote the aligned share of the response's
coefficient mass. Normalization gives multipliers $Z^{\mathrm{sign}}_o
g^{\mathrm{sign}}_t = 3/(1+2a)$ on aligned positions and $1/(1+2a)$ on inconsistent
ones, so aligned corrections receive exactly three times the multiplier of
inconsistent ones, within $[1,3]$ and $[1/3,1]$ respectively. This control removes
magnitude dependence at a fixed reallocation ratio; it does not match the full
gate's reweighting strength, which for two positions of equal magnitude and
opposite alignment is exactly $\exp(\beta|A_t|/\nu_o)$, about $1.001$ at a
median-magnitude position under the shared $\beta = 10^{-3}$.

\paragraph{Magnitude-only gate.}
We retain disagreement magnitude without outcome alignment:
\begin{equation}
 g_t^{\mathrm{mag}}
 =\sigma\!\left(\beta\frac{|A_t|}{\nu_o}\right).
\label{eq:app_ablation_magnitude}
\end{equation}
Weighting depends only on normalized disagreement magnitude;
the final coefficients retain their original signs.

\paragraph{No mass-norm.}
Setting $Z_o=1$ while retaining the full gate and scale gives
\begin{equation}
 A_t^{\mathrm{no\mbox{-}norm}}=g_tA_t.
\label{eq:app_ablation_no_norm}
\end{equation}
This removes coefficient-mass restoration, not the scale $\nu_o$.
It changes coefficient scale and is therefore not a direction-only
comparison.

\paragraph{Composition with GRPO.}
We replace the OPD component in OPD + GRPO with the full
R$^{2}$-OPD coefficient:
\begin{equation}
 A_t^{\mathrm{joint}}
 =A_t^{\mathrm{GRPO}}+\lambda_{\mathrm{OPD}}A_t^{\mathrm{R2}},
 \qquad \lambda_{\mathrm{OPD}}=1.
\label{eq:app_ablation_grpo}
\end{equation}
The GRPO advantage is computed unchanged and excluded from the
R$^{2}$ gate and response-level coefficient-mass normalization.
The combined detached advantage is inserted into the clipped
policy surrogate using the mean of per-response token means,
as specified in Appendix~\ref{app:optimization_protocol}.
All OPD coefficients are detached. This control tests the
complementarity between direct reward optimization and reward-aligned
distillation.

\subsection{Outcome-Branch Controls}
\label{app:exp_controls}

Table~\ref{tab:outcome_branch_controls} compares full R$^{2}$-OPD
with reweighting restricted to successful or failed responses,
using one training run per variant.
For complete responses, the controls are defined as follows.

\paragraph{Success-only.}
We reweight successful responses and retain standard OPD on failures:
\begin{equation}
 A_t^{\mathrm{success}}=
 \begin{cases}
 A_t^{\mathrm{R2}}, & z=+1,\\
 A_t, & z=-1.
 \end{cases}
\label{eq:app_success_only}
\end{equation}
Relative to full R$^{2}$-OPD, this removes reweighting on failed
responses without discarding their teacher supervision.

\paragraph{Failure-only.}
We reweight failed responses and retain standard OPD on successes:
\begin{equation}
 A_t^{\mathrm{failure}}=
 \begin{cases}
 A_t, & z=+1,\\
 A_t^{\mathrm{R2}}, & z=-1.
 \end{cases}
\label{eq:app_failure_only}
\end{equation}
Relative to full R$^{2}$-OPD, this removes reweighting on successful
responses without discarding their teacher supervision.

Both controls use the full gate and per-response normalization
on the selected branch, without rescaling for branch frequency.
Truncated responses retain standard OPD coefficients.
The full reference uses the same experimental setting as the branch
controls.

\begin{table}[!t]
\centering
\caption{Accuracy (\%) on seven mathematical reasoning benchmarks for
outcome-branch controls. $\Delta_{\mathrm{avg}}$ reports the change in
the seven-benchmark average relative to full R$^{2}$-OPD.}
\label{tab:outcome_branch_controls}

\small
\setlength{\tabcolsep}{2.5pt}
\renewcommand{\arraystretch}{1.08}

\begin{tabularx}{\linewidth}{
    @{}l
    *{7}{>{\centering\arraybackslash}X}
    >{\centering\arraybackslash}X@{}
}
\toprule
Variant
& AIME24
& AIME25
& AMC23
& AMC24
& MATH500
& Minerva
& Olympiad
& $\Delta_{\mathrm{avg}}$ \\
\midrule

\rowcolor{ourblue}
R$^{2}$-OPD (full)
& 30.1 & 24.5 & 65.0 & 54.4
& 77.8 & 29.4 & 45.9
& -- \\

Success-only
& 31.3 & 22.3 & 63.7 & 50.0
& 76.3 & 29.1 & 45.8
& -1.2 \\

Failure-only
& 28.1 & 20.0 & 63.7 & 52.2
& 75.1 & 28.7 & 43.7
& -2.2 \\

\bottomrule
\end{tabularx}
\end{table}

\paragraph{Results and interpretation.}
Full R$^{2}$-OPD achieves the highest seven-benchmark average of
$46.7$, exceeding Success-only and Failure-only by $1.2$ and
$2.2$, respectively.
The larger decline of Failure-only suggests that removing
reweighting on successful responses is more detrimental in this
setting.
Together, these results support reweighting both outcome branches,
consistent with their complementary roles in addressing
over-correction and over-trust.
The comparisons reflect branch-level effects under the observed
outcome distribution, not causal token-level utility.
\section{Hyperparameter Sensitivity}
\label{app:exp_sensitivity}

We examine gate sharpness in the Qwen3-4B-GRPO $\rightarrow$
Qwen3-1.7B-Base setting through an accuracy sweep, a fixed-rollout
coefficient analysis, and training-time mass profiles.
All settings retain the full reweighting rule, with the response-level
normalizer recomputed for each $\beta$.
Accuracy results use one training run per setting; the default
$\beta=10^{-3}$ is shared across the main experiments without
experiment-specific tuning.

\begin{figure}[!htbp]
    \centering
    \includegraphics[width=0.90\linewidth]
        {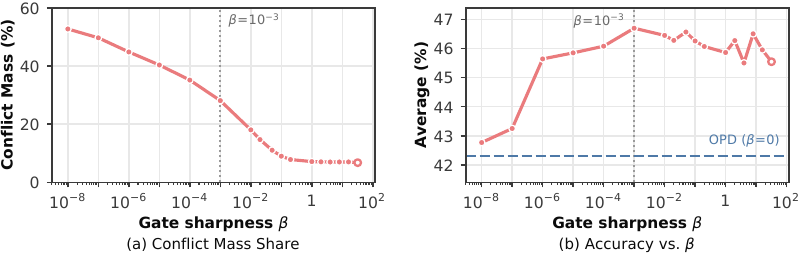}
    \caption{
        Gate-sharpness analysis.
        (a) Mean response-level inconsistent coefficient-mass share on a
        fixed rollout batch, holding advantages, outcomes, response-local
        scales, and token masks constant across $\beta$.
        Responses are equally weighted, including identity-fallback responses.
        (b) Average mathematical accuracy from one training run per setting.
        Vertical dotted lines mark $\beta=10^{-3}$; the horizontal dashed
        line in (b) denotes standard OPD, corresponding to $\beta=0$.
    }
    \label{fig:beta_sensitivity_analysis}
\end{figure}

\subsection{Sensitivity of Task Performance}
\label{app:beta_performance}

Figure~\ref{fig:beta_sensitivity_analysis}(b) shows that accuracy varies
non-monotonically with gate sharpness.
It increases from approximately $42.8\%$ at $\beta=10^{-8}$ to
$46.7\%$ at $\beta=10^{-3}$, the highest observed score in the sweep.
The sampled values between $10^{-4}$ and $10^{-1}$ achieve roughly
$46.1$--$46.7\%$, whereas those with $\beta\geq1$ yield approximately
$45.5$--$46.5\%$.
All tested settings exceed the displayed OPD reference.
Thus, strong performance extends beyond a single sharpness value,
but stronger gating offers no consistent improvement over the default.

\subsection{Reward-Inconsistent Mass Allocation}
\label{app:beta_inconsistent_mass}

For a response with $M_o>0$, $\kappa_o(\beta)$ is the fraction of its
absolute coefficient mass assigned to positions with $z(q,o)A_t<0$
(Eq.~\eqref{eq:r2_diagnostics}).
For a rollout batch $\mathcal B$ of $N=|\mathcal B|$ responses, we report
$\bar\kappa_{\mathcal B}(\beta)=N^{-1}\sum_{o\in\mathcal B}\kappa_o(\beta)$,
using equal response weights $\omega_o=1/N$ rather than pooling
coefficient mass across responses.
Truncated responses routed to the vanilla fallback remain in the
average. Their original coefficients give
$\kappa_o(\beta)=\kappa_o(0)$ for every $\beta$; their outcomes are used
for this diagnostic but not for reweighting.
% Author check: The ratio above is defined for M_o>0. The implementation's
% handling of M_o=0 or empty responses was not supplied. Do not state that
% such responses are excluded, assigned zero, or absent without verification.
% If present, document the actual logged-value convention and ensure that
% both statistics retain the stated full-batch denominator.

\paragraph{Fixed-rollout reallocation.}
Figure~\ref{fig:beta_sensitivity_analysis}(a) applies the reweighting
rule and its fallback to identical rollouts.
The mean inconsistent share decreases from approximately $53\%$ at
$\beta=10^{-8}$ to $28\%$ at the default, approaching $7\%$ at large
sharpness.
The default thus lowers the mean response-level share by about $25$
percentage points with batch membership and original coefficients fixed.
Unchanged fallback contributions remain part of the residual share.

\paragraph{Paired training-time analysis.}
Figure~\ref{fig:conflict_mass_dynamics} compares original and applied
coefficients on each R$^{2}$-OPD rollout batch.
R$^{2}$-OPD (natural) uses the original $A_t$, corresponding to
$\bar\kappa_{\mathcal B}(0)$ on that batch, not a separate training run.
Its trajectory closely tracks independently trained OPD, indicating
similar mean pre-reweighting conflict shares.
From approximately step~50 onward, the natural share remains near
$50$--$55\%$, whereas the applied share is $25$--$30\%$.
The paired curves use identical responses, advantages, outcomes, masks,
and equal response weights; fallback responses contribute identically
to both curves.
Their gap therefore isolates the direct reallocation effect on the
mean within-response share, which is approximately halved while
per-response coefficient mass is preserved.

\begin{figure}[!htbp]
    \centering
    \includegraphics[width=0.56\linewidth]
        {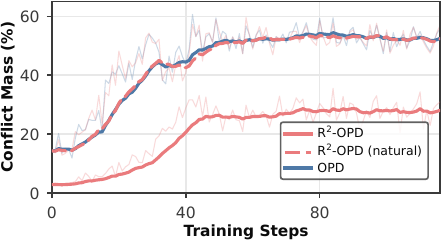}
    \caption{
        Mean response-level inconsistent coefficient-mass share during
        4B-to-1.7B distillation, averaged equally over all batch responses,
        including identity-fallback responses.
        R$^{2}$-OPD (red solid) uses the applied coefficients at
        $\beta=10^{-3}$; R$^{2}$-OPD (natural, red dashed) uses the original
        coefficients on the same rollouts.
        OPD (blue solid) denotes an independent standard-OPD run.
        Thin traces show raw observations; thick curves show smoothed trends.
    }
    \label{fig:conflict_mass_dynamics}
\end{figure}

\paragraph{Effective weighting and coefficient redistribution.}
For $M_o>0$, let $p_o(t)=|A_t|/M_o$.
On reweighted responses, the effective multiplier at nonzero-advantage
positions is
\begin{equation}
 w_t(\beta)
 =\frac{A_t^{\mathrm{R2}}(\beta)}{A_t}
 =\frac{g_t(\beta)}{\mathbb E_{p_o}[g(\beta)]}.
\label{eq:beta_effective_multiplier}
\end{equation}
On identity-fallback responses, $w_t(\beta)=1$.
Thus, effective reweighting depends on normalized disagreement and its
coefficient-mass distribution, not on $\beta$ alone.
Near-uniform gating at a median-magnitude token does not imply
near-identity weighting over the response.

Using the same full-batch denominator, define
\begin{equation}
 D_{\mathcal B}(\beta)
 =\frac{1}{N}\sum_{o\in\mathcal B}d_o(\beta),
 \qquad
 d_o(\beta)=\mathbb E_{p_o}\!\left[|w_t(\beta)-1|\right].
\label{eq:beta_coefficient_displacement}
\end{equation}
Fallback responses have $d_o=0$ but remain counted in $N$.
If their batch fraction is $f_{\mathcal B}<1$, then
$D_{\mathcal B}=(1-f_{\mathcal B})\bar d_{\mathrm{nonfb}}$,
where $\bar d_{\mathrm{nonfb}}$ is the mean displacement over the
remaining responses.
Mass conservation holds on both reweighted and fallback responses, so
the triangle inequality gives
\begin{equation}
 D_{\mathcal B}(\beta)
 \geq 2\left|\bar\kappa_{\mathcal B}(\beta)
                  -\bar\kappa_{\mathcal B}(0)\right|.
\label{eq:beta_mass_displacement_bound}
\end{equation}
Proposition~\ref{prop:r2_path} gives nonincreasing shares on reweighted
responses, while fallback shares are constant.
Hence $\bar\kappa_{\mathcal B}(0)\geq
\bar\kappa_{\mathcal B}(10^{-8})$, and the reported shares imply
$D_{\mathcal B}(10^{-3})\gtrsim0.50$.
This is a lower bound on the mean response-normalized coefficient
displacement across the full batch, including unchanged fallback
responses, not on a pooled mass ratio or parameter-gradient displacement.
It does not identify the mass-weighted tail of normalized disagreement
or attribute accuracy gains to particular token ranges.

\paragraph{Implications for supervision allocation.}
The fixed-rollout analysis shows substantial within-response reallocation
at the default, while the accuracy sweep shows that stronger
suppression of inconsistent mass does not consistently improve
performance.
This distinction is consistent with Eq.~\eqref{eq:r2_task_gain}:
first-order task progress depends on the association between gate
weights and correction utility, not on inconsistent mass alone.
The results support calibrating correction strength using outcome
evidence, without treating maximal outcome selectivity as the goal.
\section{Additional Training Dynamics}
\label{app:additional_training_dynamics}

\subsection{Response Length}
\label{app:response_length}

Figure~\ref{fig:response_length} compares training-time response
lengths for R$^{2}$-OPD and standard sampled-token OPD.
Both methods exhibit rapid length growth during early training,
reaching approximately $5.5$k tokens by step~40.
Their trajectories remain closely matched thereafter, including
similar fluctuations during later training.
The performance gains therefore coexist with comparable response
lengths, providing evidence against longer training rollouts as
their primary explanation.

\begin{figure}[!htbp]
    \centering
    \includegraphics[width=0.50\linewidth]
        {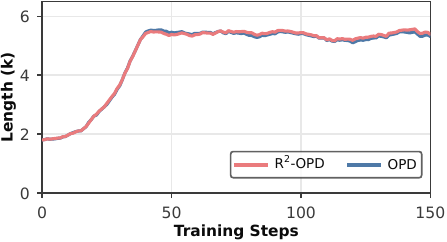}
    \caption{
        Training-time response length for R$^{2}$-OPD and standard
        sampled-token OPD, denoted as OPD.
        Length is measured in thousands of tokens.
    }
    \label{fig:response_length}
\end{figure}

\subsection{Evaluation Trajectories Across All Benchmarks}
\label{app:full_evaluation_trajectories}

Figure~\ref{fig:eval_acc_all_benchmarks_17b} extends the learning curves
in Figure~\ref{fig:main_results_accuracy_17b} to all seven mathematical
benchmarks in the 1.7B cross-size setting.
Most methods improve rapidly within the first $40$--$60$ training steps
and change more gradually thereafter.
R$^{2}$-OPD shows early and sustained gains on several benchmarks,
with particularly visible separation on AIME24, AIME25, AMC24,
and OlympiadBench.
These benchmark-wise trajectories complement the reported results
in Table~\ref{tab:main_results} and reveal variation in the magnitude
of the gains.

\begin{figure}[!p]
    \centering
    \includegraphics[
        width=\linewidth,
        height=0.82\textheight,
        keepaspectratio
    ]{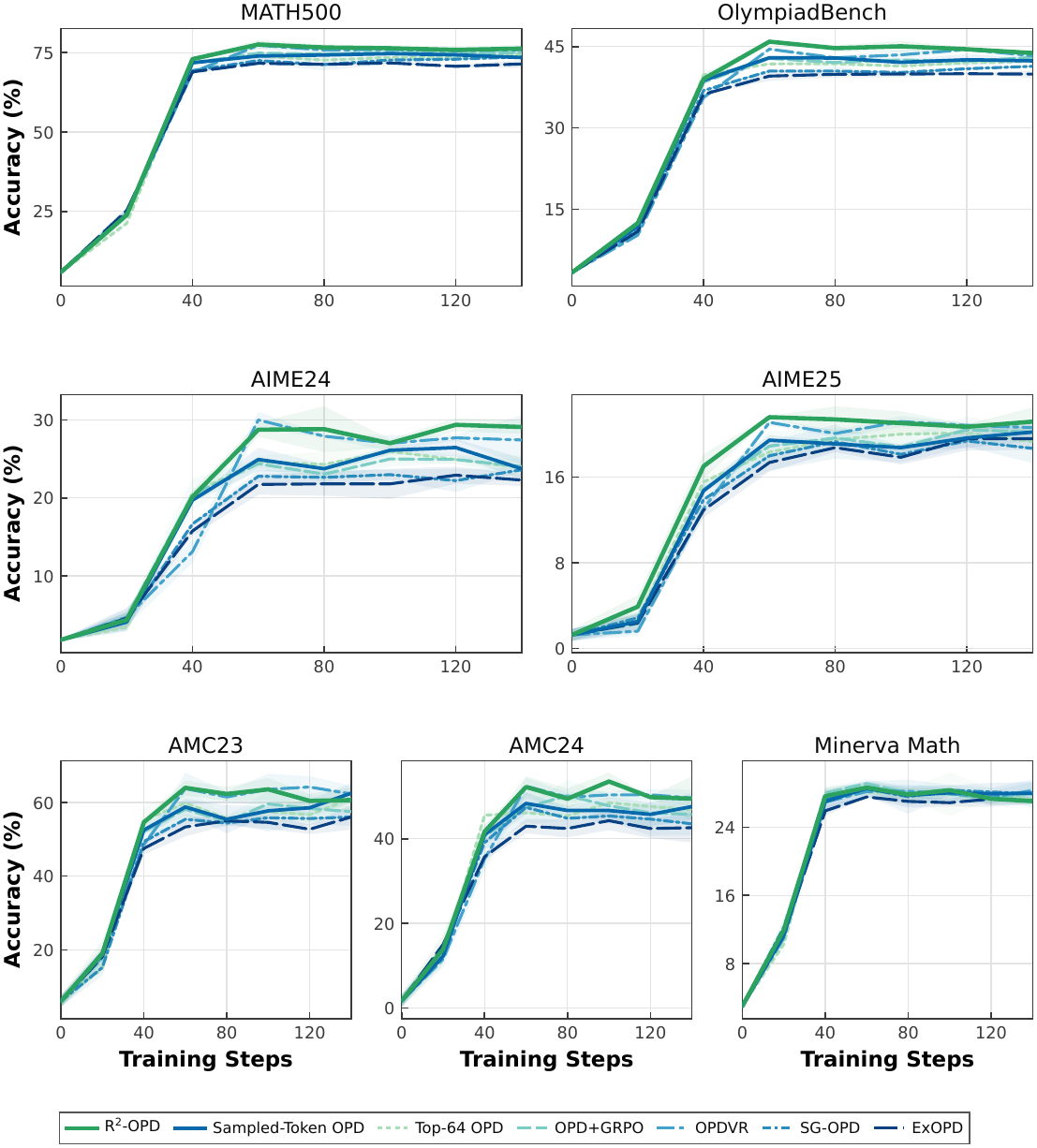}
    \caption{
        Evaluation accuracy across seven mathematical reasoning benchmarks
        during cross-size distillation from Qwen3-4B-GRPO to
        Qwen3-1.7B-Base.
        Curves show the mean and shaded bands indicate $\pm$ one sample
        standard deviation across three independent training seeds.
    }
    \label{fig:eval_acc_all_benchmarks_17b}
\end{figure}
\section{Computational Cost}
\label{app:exp_efficiency}

\paragraph{Model computation.}
For matched rollout batches and update schedules, R$^{2}$-OPD uses
the same student generation, teacher scoring, and student optimization
as standard sampled-token OPD.
Reweighting introduces no additional model passes, trainable parameters,
or inference-time components.
Neither teacher-generated responses nor auxiliary value estimation
is required.

\paragraph{Reweighting overhead.}
For a batch containing $N$ valid response tokens, let $\mathcal I$
index complete positive-mass responses and $n_i$ denote their lengths.
The additional coefficient computation has cost
\begin{equation}
 C_{\mathrm{rw}}
 =\sum_{i\in\mathcal I}C_{\mathrm{med}}(n_i)+O(N),
\label{eq:exp_reweighting_cost}
\end{equation}
where $C_{\mathrm{med}}$ denotes the implementation-dependent median
cost; gate evaluation and mass normalization are linear in token count.
Reweighting can be implemented with $O(N)$ auxiliary scalar storage,
excluding median workspace, without additional vocabulary-sized tensors.
All reweighting operations are detached from backpropagation.

\paragraph{Verification and end-to-end cost.}
Outcome verification adds task-dependent cost when the baseline does
not already score the same rollouts; otherwise, verifier outputs can
be reused.
Online teacher scoring belongs to distillation cost, whereas teacher
preparation and benchmark evaluation are separate costs.
End-to-end runtime also depends on realized response lengths and
implementation, so unchanged model-pass counts alone do not establish
wall-clock equivalence or negligible overhead.
% Replace the previous appendix_a_formal_derivations.tex; input only once.
% Main-text labels are resolved by the complete manuscript.
% The batch proof uses the log-probability surrogate in the current main text.

\section{Gradient Analysis of Sampled-Token OPD}
\label{app:proofs}

This appendix derives Eq.~\eqref{eq:opd_gradient_consistency}
and clarifies the coefficient-level meaning of reward alignment.
We adopt the assumptions of Section~\ref{sec:preliminaries} and
assume the regularity required to interchange differentiation and
expectation. Rollout distributions and sampled coefficients are fixed
during differentiation. The analysis concerns the unclipped surrogate,
not subsequent optimizer transformations.

\subsection{Statewise Reverse-KL Gradient}
\label{app:statewise_rkl}

At a fixed prefix $s$, write
\begin{equation}
 K_s(\theta)=\sum_{a\in\mathcal V}\pi_\theta(a\mid s)
 \log\frac{\pi_\theta(a\mid s)}{\pi_T(a\mid s)}.
\label{eq:app_rkl}
\end{equation}
Differentiation gives
\begin{align}
 \nabla_\theta K_s(\theta)
 &=\sum_a \nabla_\theta\pi_\theta(a\mid s)
   \log\frac{\pi_\theta(a\mid s)}{\pi_T(a\mid s)}
   +\sum_a\nabla_\theta\pi_\theta(a\mid s)
   \nonumber\\
 &=\mathbb E_{a\sim\pi_\theta(\cdot\mid s)}\!\left[
   \log\frac{\pi_\theta(a\mid s)}{\pi_T(a\mid s)}
   \nabla_\theta\log\pi_\theta(a\mid s)\right],
\label{eq:app_rkl_gradient}
\end{align}
where $\sum_a\nabla_\theta\pi_\theta(a\mid s)=0$ by normalization.
Define $A_\theta(s,a)=\log\pi_T(a\mid s)-\log\pi_\theta(a\mid s)$,
so that $A_t=A_{\bar\theta}(s_t,o_t)$ in
Eq.~\eqref{eq:sampled_advantage}.
Then
\begin{equation}
 \nabla_\theta K_s(\theta)
 =-\mathbb E_{a\sim\pi_\theta(\cdot\mid s)}\!\left[
 A_\theta(s,a)\nabla_\theta\log\pi_\theta(a\mid s)\right].
\label{eq:app_rkl_rho}
\end{equation}
Equation~\eqref{eq:app_rkl_rho} gives an unbiased single-action
gradient estimator. Separately,
\begin{equation}
 \mathbb E_{a\sim\pi_\theta(\cdot\mid s)}[A_\theta(s,a)]
 =-K_s(\theta)\leq0.
\label{eq:app_offset}
\end{equation}
A nonpositive mean coefficient does not imply uniform suppression:
the update depends on its association with the action-dependent
score function, whose expectation is zero.

\subsection{First-Order Equivalence of the Sampled Surrogate}
\label{app:sampled_surrogate_proof}

Let $\mathbb E_{\bar\theta}$ denote expectation over
$q\sim\mathcal D$ and $o\sim\pi_{\bar\theta}(\cdot\mid q)$.
The main-text surrogate is
\begin{equation}
 \mathcal L_{\mathrm{OPD}}^{\mathrm{sample}}(\theta;\bar\theta)
 =-\mathbb E_{\bar\theta}\!\left[
 \sum_{t=1}^{|o|}\operatorname{sg}[A_t]
 \log\pi_\theta(o_t\mid s_t)\right].
\label{eq:app_sampled_surrogate}
\end{equation}

\begin{proposition}[Rollout-point gradient equivalence]
\label{prop:app_unbiased}
With the rollout distribution fixed,
\begin{equation}
 \left.\nabla_\theta
 \mathcal L_{\mathrm{OPD}}^{\mathrm{sample}}(\theta;\bar\theta)
 \right|_{\theta=\bar\theta}
 =\left.\nabla_\theta
 \mathcal L_{\mathrm{OPD}}(\theta;\bar\theta)
 \right|_{\theta=\bar\theta}.
\label{eq:app_unbiased}
\end{equation}
\end{proposition}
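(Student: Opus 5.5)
The plan is to compare the two gradients term by term at each fixed prefix and then sum over positions and integrate over prompts and rollouts. The rollout distribution $\pi_{\bar\theta}$ is held fixed on both sides, so the outer expectation $\mathbb E_{\bar\theta}$ and the finite sum over $t\le|o|$ (bounded horizon) commute with $\nabla_\theta$. Under the stated regularity, it therefore suffices to show, for every visited prefix $s_t$, that
\begin{equation*}
 -\mathbb E_{o_t\sim\pi_{\bar\theta}(\cdot\mid s_t)}\!\left[A_t\,\nabla_\theta\log\pi_\theta(o_t\mid s_t)\big|_{\theta=\bar\theta}\right]
 =\nabla_\theta K_{s_t}(\theta)\big|_{\theta=\bar\theta}.
\end{equation*}
The tower property, conditioning on $s_t$, then lifts this to the full trajectory expectation on both sides. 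On the left, the inner expectation replaces the sampled token $o_t$ and leaves the prefix distribution unchanged. On the right, $\mathcal L_{\mathrm{OPD}}$ is the same prefix-weighted sum of $K_{s_t}(\theta)$.

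For the right-hand side I would invoke Eq.~\eqref{eq:app_rkl_rho} from Appendix~\ref{app:statewise_rkl}:
\begin{equation*}
\nabla_\theta K_s(\theta)=-\mathbb E_{a\sim\pi_\theta(\cdot\mid s)}\!\left[A_\theta(s,a)\nabla_\theta\log\pi_\theta(a\mid s)\right].
\end{equation*}
This identity uses $\sum_a\nabla_\theta\pi_\theta(a\mid s)=0$, which is where the positivity and finiteness of $\mathcal V$ matter. Evaluating at $\theta=\bar\theta$ gives three things: the sampling distribution becomes $\pi_{\bar\theta}(\cdot\mid s)$, the rollout distribution; the coefficient becomes $A_{\bar\theta}(s,a)$, which equals $A_t$ when $a=o_t$; and the score is evaluated at $\bar\theta$.

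For the left-hand side, the stop-gradient means $\operatorname{sg}[A_t]$ is a constant equal to $A_{\bar\theta}(s_t,o_t)$. Differentiating therefore only hits $\log\pi_\theta(o_t\mid s_t)$, which yields exactly the expression above. Matching the two sides completes the argument.

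The main subtlety, rather than a computational obstacle, is to state carefully that the right-hand side differentiates only through the integrand and not through the state-visitation distribution. Consistent with the remark after Eq.~\eqref{eq:opd_gradient_consistency}, no policy-gradient term from the prefix distribution appears, because $\bar\theta$ is frozen in the sampling measure. A second point to note is that equality holds only at $\theta=\bar\theta$. For $\theta\ne\bar\theta$, the KL gradient would sample from $\pi_\theta$ with coefficient $A_\theta$, whereas the surrogate keeps $\pi_{\bar\theta}$ and $A_{\bar\theta}$. The identity is thus first-order only.
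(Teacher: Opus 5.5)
Your proposal is correct and follows the paper's proof essentially step for step. You use the stop-gradient to reduce the surrogate gradient to $-\mathbb E_{\bar\theta}[\sum_t A_t\psi_t]$, condition on each visited prefix and apply Eq.~\eqref{eq:app_rkl_rho} at $\theta=\bar\theta$, then sum over positions and take the rollout expectation; your caveats about the frozen visitation measure and the first-order-only scope match the paper's own remarks.
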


\begin{proof}
Let $\psi_t=\left.\nabla_\theta\log\pi_\theta(o_t\mid s_t)
\right|_{\theta=\bar\theta}$.
Since $A_t$ is detached, the left-hand side equals
$-\mathbb E_{\bar\theta}[\sum_t A_t\psi_t]$.
Conditioning on each visited prefix and applying
Eq.~\eqref{eq:app_rkl_rho} gives
\begin{equation}
 -\mathbb E_{o_t\sim\pi_{\bar\theta}(\cdot\mid s_t)}
 [A_t\psi_t\mid s_t]
 =\left.\nabla_\theta K_t(\theta)\right|_{\theta=\bar\theta}.
\label{eq:app_conditional_gradient}
\end{equation}
Summing over response positions and taking the rollout expectation
therefore yields
\begin{align}
 -\mathbb E_{\bar\theta}\!\left[\sum_t A_t\psi_t\right]
 &=\mathbb E_{\bar\theta}\!\left[
 \sum_t\left.\nabla_\theta K_t(\theta)\right|_{\theta=\bar\theta}
 \right]\nonumber\\
 &=\left.\nabla_\theta
 \mathcal L_{\mathrm{OPD}}(\theta;\bar\theta)
 \right|_{\theta=\bar\theta},
\end{align}
which proves Eq.~\eqref{eq:opd_gradient_consistency}.
\end{proof}

The identity establishes gradient equivalence at the rollout parameters,
not equality of loss values or differentiation through state visitation.
It also requires the sampling distribution to agree with the
$\pi_{\bar\theta}$ used in $A_t$.
For comparison, the importance-ratio surrogate uses
$r_t(\theta)=\pi_\theta(o_t\mid s_t)/\pi_{\bar\theta}(o_t\mid s_t)$
in place of $\log\pi_\theta(o_t\mid s_t)$.
It has the same rollout-point gradient because
$\left.\nabla_\theta r_t(\theta)\right|_{\theta=\bar\theta}=\psi_t$,
but the two surrogates generally differ away from $\bar\theta$.

\subsection{Outcome-Aligned Coefficient Decomposition}
\label{app:alignment_partition}

For a fixed complete response, let $z=z(q,o)\in\{-1,+1\}$ and
let $m_t\in\{0,1\}$ mark valid response tokens.
Define the aligned and inconsistent components
\begin{equation}
 a_t^{\mathrm{ali}}=m_tA_t\mathbb I[zA_t>0],
 \qquad
 a_t^{\mathrm{inc}}=m_tA_t\mathbb I[zA_t<0].
\label{eq:app_partition_components}
\end{equation}
The two events partition all nonzero coefficients, giving
\begin{equation}
 m_tA_t=a_t^{\mathrm{ali}}+a_t^{\mathrm{inc}}.
\label{eq:app_exact_partition}
\end{equation}
For $A_t\neq0$,
$\mathbb I[zA_t>0]=(1+z\operatorname{sgn}(A_t))/2$.
Using $A_t\operatorname{sgn}(A_t)=|A_t|$ yields
\begin{equation}
 a_t^{\mathrm{ali}}=\frac{m_t}{2}(A_t+z|A_t|),
 \qquad
 a_t^{\mathrm{inc}}=\frac{m_t}{2}(A_t-z|A_t|).
\label{eq:app_closed_forms}
\end{equation}
Both identities also hold when $A_t=0$.

For $\ell_o(w;\theta)=-\sum_t\operatorname{sg}[w_t]
\log\pi_\theta(o_t\mid s_t)$, linearity in detached coefficients gives
\begin{align}
 \ell_o(mA;\theta)
 &=\ell_o(a^{\mathrm{ali}};\theta)
  +\ell_o(a^{\mathrm{inc}};\theta),\nonumber\\
 \nabla_\theta\ell_o(mA;\theta)
 &=\nabla_\theta\ell_o(a^{\mathrm{ali}};\theta)
  +\nabla_\theta\ell_o(a^{\mathrm{inc}};\theta).
\label{eq:app_partition_gradient}
\end{align}
These identities extend to batch averages under the same linear
reduction, without separately normalizing the two components.
This partition describes the dense coefficients; it is not the
R$^{2}$-OPD update rule.
At finite sharpness, R$^{2}$-OPD retains nonzero corrections in both
groups through positive reweighting; truncated responses retain standard OPD.

\subsection{Token-Level Alignment and Its Scope}
\label{app:alignment_geometry}

At $\bar\theta$, let
$\psi_t=\left.\nabla_\theta\log\pi_\theta(o_t\mid s_t)
\right|_{\theta=\bar\theta}$ and define
\begin{equation}
 d_t^{\mathrm{out}}=z\psi_t,
 \qquad d_t^{\mathrm{ali}}=a_t^{\mathrm{ali}}\psi_t,
 \qquad d_t^{\mathrm{inc}}=a_t^{\mathrm{inc}}\psi_t.
\end{equation}
The coefficient partition implies
\begin{align}
 \langle d_t^{\mathrm{ali}},d_t^{\mathrm{out}}\rangle
 &=m_t\mathbb I[zA_t>0]|A_t|\|\psi_t\|_2^2\geq0,
 \nonumber\\
 \langle d_t^{\mathrm{inc}},d_t^{\mathrm{out}}\rangle
 &=-m_t\mathbb I[zA_t<0]|A_t|\|\psi_t\|_2^2\leq0.
\label{eq:app_local_geometry}
\end{align}
The inequalities are strict on the corresponding active component
when $\psi_t\neq0$.
They characterize alignment with an outcome-signed direction for the
same token, not with the expected task gradient $g_R$.
Summing in the direct-sum space of token directions preserves these
signs; aggregation in the shared parameter space introduces cross-token
inner products and need not preserve them.
Consequently, token-level outcome alignment alone does not imply
reward ascent. The task-progress criterion in
Eq.~\eqref{eq:r2_task_gain} additionally depends on correction utility.

\paragraph{Trajectory centering.}
For $N_o=\sum_tm_t>0$, define
$\bar A_o=N_o^{-1}\sum_tm_tA_t$ and
$\widetilde A_t=A_t-\bar A_o$.
Then $\sum_tm_t\widetilde A_t=0$, so nonconstant valid advantages
acquire both positive and negative centered values.
Routing by $z\widetilde A_t$ therefore need not agree with routing
by $zA_t$; R$^{2}$-OPD uses the latter.
This response-level centering is also distinct from the conditional
action centering $A_t-\mathbb E_s[A_t]$ in
Eq.~\eqref{eq:mot_value_balance}.

\section{Teacher Preference and Student Continuation Value: Further Analysis}
\label{app:motivation-proofs}

This appendix proves Proposition~\ref{prop:mot_local_separation},
relates its covariance criterion to over-correction and over-trust,
and clarifies the outcome evidence motivating supervision allocation
in Section~\ref{sec:motivation}.

\subsection{Setup and Normalized Local Perturbation}
\label{app:motivation-setup}

Fix a prefix $s=s_t=(q,o_{<t})$ and a finite action set $\mathcal V_s$
on which $\pi_{\bar\theta}(\cdot\mid s)$ and $\pi_T(\cdot\mid s)$
are normalized and strictly positive.
A finite generation horizon and a common termination convention ensure
that the binary reward is defined, so $Q_R^{\bar\theta}(s,a)\in[0,1]$.
Only the next-action distribution at $s$ changes; the continuation
policy remains $\pi_{\bar\theta}$, leaving $Q_R^{\bar\theta}(s,a)$ fixed.
Unless stated otherwise, moments indexed by $s$ are taken over
$o_t\sim\pi_{\bar\theta}(\cdot\mid s)$, with $A_t$ defined in
Eq.~\eqref{eq:sampled_advantage}.

For $0\leq\epsilon\leq1$, the normalized interpolation in
Eq.~\eqref{eq:mot_local_interpolation} is
\begin{align}
 Z_s(\epsilon)
 &=\sum_{a\in\mathcal V_s}
 \pi_{\bar\theta}(a\mid s)^{1-\epsilon}\pi_T(a\mid s)^\epsilon,
 \label{eq:app_mot_normalizer}\\
 \pi_\epsilon(a\mid s)
 &=\frac{\pi_{\bar\theta}(a\mid s)^{1-\epsilon}
 \pi_T(a\mid s)^\epsilon}{Z_s(\epsilon)}.
 \label{eq:app_mot_path}
\end{align}
Thus $Z_s(0)=Z_s(1)=1$, $\pi_0=\pi_{\bar\theta}$, and $\pi_1=\pi_T$
at the fixed prefix. Finiteness and strict positivity imply smoothness
in a neighborhood of $\epsilon=0$.
Differentiating gives
\begin{equation}
 \frac{\partial}{\partial\epsilon}\log\pi_\epsilon(a\mid s)
 =\log\frac{\pi_T(a\mid s)}{\pi_{\bar\theta}(a\mid s)}
 -\frac{Z_s'(\epsilon)}{Z_s(\epsilon)}.
\label{eq:app_mot_log_derivative}
\end{equation}
At $\epsilon=0$,
\begin{equation}
 \frac{Z_s'(0)}{Z_s(0)}
 =\sum_a\pi_{\bar\theta}(a\mid s)
 \log\frac{\pi_T(a\mid s)}{\pi_{\bar\theta}(a\mid s)}
 =\mathbb E_s[A_t].
\label{eq:app_mot_mean}
\end{equation}
Writing $\dot\pi_\epsilon(a\mid s)=\partial_\epsilon\pi_\epsilon(a\mid s)$,
we obtain
\begin{equation}
 \dot\pi_0(a\mid s)
 =\pi_{\bar\theta}(a\mid s)
 \left(\log\frac{\pi_T(a\mid s)}{\pi_{\bar\theta}(a\mid s)}
 -\mathbb E_s[A_t]\right),
\label{eq:app_mot_probability_derivative}
\end{equation}
with $\sum_a\dot\pi_0(a\mid s)=0$.
This normalized probability-space intervention need not coincide
with gradient descent in shared model parameters.

\subsection{Proof of Proposition~\ref{prop:mot_local_separation}}
\label{app:motivation-local-proof}

\begin{proof}
The local reverse KL is
\[
 D_s(\epsilon)=\sum_a\pi_\epsilon(a\mid s)
 \log\frac{\pi_\epsilon(a\mid s)}{\pi_T(a\mid s)}.
\]
Differentiating and using $\sum_a\dot\pi_\epsilon(a\mid s)=0$ yields
\begin{align}
 D_s'(\epsilon)
 &=\sum_a\dot\pi_\epsilon(a\mid s)
 \log\frac{\pi_\epsilon(a\mid s)}{\pi_T(a\mid s)}
 +\sum_a\dot\pi_\epsilon(a\mid s)\nonumber\\
 &=\sum_a\dot\pi_\epsilon(a\mid s)
 \log\frac{\pi_\epsilon(a\mid s)}{\pi_T(a\mid s)}.
\label{eq:app_mot_kl_derivative_general}
\end{align}
At $\epsilon=0$, the log ratio is $-A_t$ for $o_t=a$.
Substituting Eq.~\eqref{eq:app_mot_probability_derivative} gives
\begin{align}
 D_s'(0)
 &=-\mathbb E_s\!\left[(A_t-\mathbb E_s[A_t])A_t\right]\nonumber\\
 &=-\operatorname{Var}_s[A_t],
\label{eq:app_mot_kl_result}
\end{align}
which proves Eq.~\eqref{eq:mot_kl_derivative}.

Since the continuation values are fixed, differentiating
$F_s(\epsilon)=\sum_a\pi_\epsilon(a\mid s)Q_R^{\bar\theta}(s,a)$
yields
\begin{align}
 F_s'(0)
 &=\sum_a\dot\pi_0(a\mid s)Q_R^{\bar\theta}(s,a)\nonumber\\
 &=\mathbb E_s\!\left[
 (A_t-\mathbb E_s[A_t])Q_R^{\bar\theta}(s,o_t)\right]\nonumber\\
 &=\operatorname{Cov}_s\!\left(A_t,Q_R^{\bar\theta}(s,o_t)\right),
\label{eq:app_mot_value_result}
\end{align}
proving Eq.~\eqref{eq:mot_value_derivative}.

Negative covariance implies that $A_t$ is nonconstant and hence
$\operatorname{Var}_s[A_t]>0$.
Smoothness gives
\begin{align}
 D_s(\epsilon)-D_s(0)
 &=-\epsilon\operatorname{Var}_s[A_t]+O(\epsilon^2),
 \label{eq:app_mot_kl_expansion}\\
 F_s(\epsilon)-F_s(0)
 &=\epsilon\operatorname{Cov}_s\!\left(
 A_t,Q_R^{\bar\theta}(s,o_t)\right)+O(\epsilon^2).
 \label{eq:app_mot_value_expansion}
\end{align}
Both first-order coefficients are strictly negative, so there exists
$\epsilon_0>0$ such that both quantities decrease for every
$0<\epsilon<\epsilon_0$.
\end{proof}

\subsection{Centered Preferences and the Two Failure Modes}
\label{app:motivation-failure-modes}

Since $F_s(0)=\mathbb E_s[Q_R^{\bar\theta}(s,o_t)]$ and
$\mathbb E_s[A_t-\mathbb E_s[A_t]]=0$, the reward derivative also admits
the centered form
\begin{equation}
 F_s'(0)=\mathbb E_s\!\left[
 (A_t-\mathbb E_s[A_t])
 (Q_R^{\bar\theta}(s,o_t)-F_s(0))\right].
\label{eq:app_mot_centered_balance}
\end{equation}
Under the local interpolation, the value-based forms of the two
failure modes are
\begin{align}
 \text{Over-correction:}\quad
 &A_t-\mathbb E_s[A_t]<0,\quad
 Q_R^{\bar\theta}(s,o_t)>F_s(0),
 \label{eq:app_mot_overcorrection}\\
 \text{Over-trust:}\quad
 &A_t-\mathbb E_s[A_t]>0,\quad
 Q_R^{\bar\theta}(s,o_t)<F_s(0).
 \label{eq:app_mot_overtrust}
\end{align}
The former shifts probability away from actions with above-average
student value; the latter shifts it toward actions with below-average value.
Both contribute negatively to Eq.~\eqref{eq:app_mot_centered_balance},
but the net derivative depends on the probability-weighted contributions
of all actions.

The distinction between raw and centered preference is essential.
The sign of $A_t$ describes an individual surrogate coefficient,
whereas normalized probability changes depend on $A_t-\mathbb E_s[A_t]$.
Because
\begin{equation}
 \mathbb E_s[A_t]=-K_t(\bar\theta)\leq0,
\label{eq:app_mot_preference_mean}
\end{equation}
a negative $A_t$ need not imply decreasing action probability along
the interpolation.
Neither sign alone determines an aggregate parameter update, which
also depends on the remaining token contributions and parameter sharing.

\subsection{What Trajectory Outcomes Reveal}
\label{app:motivation-outcome-evidence}

The signed outcome in Eq.~\eqref{eq:signed_outcome} provides evidence
about continuation quality. Conditional on the prefix and next action,
\begin{equation}
 \mathbb E\!\left[z(q,o)\mid s_t=s,o_t=a\right]
 =2Q_R^{\bar\theta}(s,a)-1,
\label{eq:app_mot_conditional_outcome}
\end{equation}
where the suffix follows $\pi_{\bar\theta}$.
For moments involving $z(q,o)$, the conditional rollout law includes
both the next action and the student-generated suffix.
Since $A_t$ depends only on the fixed prefix and next action,
the tower property gives
\begin{align}
 \operatorname{Cov}_s(A_t,z(q,o))
 &=\mathbb E_s\!\left[(A_t-\mathbb E_s[A_t])z(q,o)\right]\nonumber\\
 &=\mathbb E_s\!\left[(A_t-\mathbb E_s[A_t])
 \bigl(2Q_R^{\bar\theta}(s,o_t)-1\bigr)\right]\nonumber\\
 &=2\operatorname{Cov}_s\!\left(A_t,Q_R^{\bar\theta}(s,o_t)\right)
 =2F_s'(0).
\label{eq:app_mot_outcome_covariance}
\end{align}
This expectation-level identity links outcome evidence to the local
value criterion without identifying the utility of a single correction.
In particular, the sign of $z(q,o)A_t$ need not match the sign of an
action's contribution to Eq.~\eqref{eq:app_mot_centered_balance}:
$z$ compares a sampled reward with a fixed threshold, not with $F_s(0)$,
and contains continuation sampling noise.
Excluding outcomes from the definition of $A_t$ therefore does not imply
statistical independence between $A_t$ and $z(q,o)$.

R$^{2}$-OPD combines this outcome evidence with the sign and magnitude
of $A_t$ to reallocate teacher supervision.
Positive weights preserve coefficient signs and nonzero feedback,
while per-response normalization conserves absolute coefficient mass.
Proposition~\ref{prop:mot_local_separation} motivates this allocation
principle but does not identify optimal weights, establish the sigmoid
as task-optimal, or guarantee positive reward change under reweighting.
The method's allocation properties and empirical performance remain
distinct from this local diagnostic.

\subsection{An Analytic Two-Route Counterexample}
\label{app:mot-two-routes}

Consider a finite decision process in which two actions at a fixed
prefix select routes $a_1$ and $a_2$, followed by a terminal
success/failure action. The probabilities below specify an illustrative
construction, not empirical measurements from a language model.
\begin{center}
\begin{tabular}{@{}lccc@{}}
\toprule
Route & $\pi_{\bar\theta}(a\mid s)$ & $\pi_T(a\mid s)$
& $Q_R^{\bar\theta}(s,a)$ \\
\midrule
$a_1$ & $0.2$ & $0.8$ & $0.1$ \\
$a_2$ & $0.8$ & $0.2$ & $0.8$ \\
\bottomrule
\end{tabular}
\end{center}
The final column gives the student's continuation success probabilities.
Let the teacher succeed with probabilities $0.95$ and $0.90$ after
selecting $a_1$ and $a_2$, respectively.
Its total success probability is $0.8(0.95)+0.2(0.90)=0.94$,
whereas the student's is
\[
 F_s(0)=0.2(0.1)+0.8(0.8)=0.66.
\]

The distillation advantages are $\log4$ and $-\log4$, with mean
$-0.6\log4$ and centered values $1.6\log4$ and $-0.4\log4$.
The corresponding student values relative to $F_s(0)$ are $-0.56$
and $0.14$.
Thus, $a_1$ illustrates over-trust and $a_2$ over-correction in the
value-based sense above, with
\begin{align}
 D_s'(0)&=-0.64(\log4)^2<0,\\
 F_s'(0)&=0.2(0.8)(2\log4)(0.1-0.8)
 =-0.224\log4<0.
\end{align}
At $\epsilon=1$, route selection exactly matches the teacher,
so $D_s(1)=0$, compared with $D_s(0)=0.6\log4$.
Keeping the student's continuation policy fixed nevertheless gives
\[
 F_s(1)=0.8(0.1)+0.2(0.8)=0.24.
\]
The example isolates partial imitation: adopting the teacher's route
selection does not transfer its ability to complete those routes.
It establishes the possibility of the preference--value mismatch,
not its prevalence in language models or a guarantee that R$^{2}$-OPD
resolves this particular decision process.
\section{Core Proofs and Their Scope}
\label{app:r2_proofs}

For the coefficient-space results, fix a nonempty complete response $o$,
its finite advantages $A_t$, outcome $z=z(q,o)\in\{-1,+1\}$, and
scale $\nu_o>0$. These quantities remain fixed as $\beta$ varies.
Unless stated otherwise, $M_o=\sum_t|A_t|>0$ and $0\leq\beta<\infty$.
All sums are over valid response tokens, and normalization is exact.

\subsection{Proof of Proposition~\ref{prop:r2_decomposition}}
\label{app:r2_decomposition}

\begin{proof}
Substituting the sigmoid gate into Eq.~\eqref{eq:r2_mass_weight} gives
\begin{equation}
 A_t^{\mathrm{R2}}=Z_og_tA_t
 =Z_oA_t\sigma\!\left(\frac{\beta zA_t}{\nu_o}\right).
\label{eq:r2_coefficient_substitution}
\end{equation}
Using the identity
\begin{equation}
 \sigma(x)=\frac{e^x}{1+e^x}
 =\frac12\left(1+\tanh\frac{x}{2}\right),
\label{eq:r2_sigmoid_identity}
\end{equation}
we obtain
\begin{equation}
 A_t^{\mathrm{R2}}
 =\frac{Z_o}{2}A_t
 +\frac{Z_o}{2}A_t\tanh\!\left(\frac{\beta zA_t}{2\nu_o}\right).
\label{eq:r2_appendix_expansion}
\end{equation}
For $A_t\neq0$, oddness of $\tanh$ and
$A_t\operatorname{sgn}(A_t)=|A_t|$ imply
\begin{equation}
 A_t\tanh\!\left(\frac{\beta zA_t}{2\nu_o}\right)
 =z|A_t|\tanh\!\left(\frac{\beta|A_t|}{2\nu_o}\right).
\label{eq:r2_oddness}
\end{equation}
Both sides vanish when $A_t=0$.
Substitution into Eq.~\eqref{eq:r2_appendix_expansion} proves
Eq.~\eqref{eq:r2_decomposition}.
The identity holds pointwise in $\beta$, including the dependence
of $Z_o$ on $\beta$; no small-sharpness approximation is required.
\end{proof}

For nonzero $A_t$, the magnitude ratio of Term B to Term A is
$\tanh(\beta|A_t|/(2\nu_o))<1$, so the coefficient retains its sign.
This does not determine the token's probability change after a
shared-parameter batch update.

\subsection{Proof of Proposition~\ref{prop:r2_path}}
\label{app:r2_path}

\begin{proof}
Write $b_t=|A_t|$, $a_t=zA_t/\nu_o$, and $D_o=\sum_tg_tb_t$,
so $g_t=\sigma(\beta a_t)$ and $Z_o=M_o/D_o$.

\paragraph{Sign and mass preservation.}
For finite $\beta$, $0<g_t<1$ implies $D_o>0$ and $Z_o>0$.
Thus, multiplication by $Z_og_t$ preserves nonzero coefficients
and their signs, with
\begin{equation}
 \sum_t|A_t^{\mathrm{R2}}|
 =Z_o\sum_tg_tb_t=\frac{M_o}{D_o}D_o=M_o.
\label{eq:r2_appendix_mass}
\end{equation}

\paragraph{Monotone reallocation.}
Let $\mathcal A_o=\{t:a_t>0\}$ and $\mathcal C_o=\{t:a_t<0\}$;
zero-advantage positions carry no mass.
Define the unnormalized group masses
\begin{equation}
 B_{\mathrm A}=\sum_{t\in\mathcal A_o}b_tg_t,
 \qquad B_{\mathrm C}=\sum_{t\in\mathcal C_o}b_tg_t.
\label{eq:r2_gated_group_mass}
\end{equation}
The common normalizer cancels in the inconsistent mass share:
\begin{equation}
 \kappa_o(\beta)
 =\frac{Z_oB_{\mathrm C}}{Z_o(B_{\mathrm A}+B_{\mathrm C})}
 =\frac{B_{\mathrm C}}{B_{\mathrm A}+B_{\mathrm C}}.
\label{eq:r2_share_cancellation}
\end{equation}
With $a_t$ fixed,
\begin{equation}
 g_t'(\beta)=a_tg_t(1-g_t).
\label{eq:r2_gate_derivative}
\end{equation}
If both groups carry positive mass, then
$B_{\mathrm A},B_{\mathrm C}>0$,
$B_{\mathrm A}'>0$, and $B_{\mathrm C}'<0$.
The quotient rule therefore yields
\begin{equation}
 \kappa_o'(\beta)
 =\frac{B_{\mathrm C}'B_{\mathrm A}-B_{\mathrm C}B_{\mathrm A}'}
 {(B_{\mathrm A}+B_{\mathrm C})^2}<0.
\label{eq:r2_appendix_selectivity}
\end{equation}
At $\beta=0$, the derivative is right-sided.
Since total mass remains $M_o$, inconsistent mass $M_o\kappa_o$
decreases while aligned mass $M_o(1-\kappa_o)$ increases.

\paragraph{Endpoints.}
At $\beta=0$, $g_t=1/2$ and $Z_o=2$, giving
$A_t^{\mathrm{R2}}=A_t$.
As $\beta\to\infty$, $g_t$ tends to one on $\mathcal A_o$
and zero on $\mathcal C_o$.
If $M_o^{\mathrm{ali}}=\sum_{t\in\mathcal A_o}b_t>0$, then
$D_o\to M_o^{\mathrm{ali}}$ and $Z_o\to M_o/M_o^{\mathrm{ali}}$.
Hence
\begin{equation}
 \lim_{\beta\to\infty}A_t^{\mathrm{R2}}
 =\frac{M_o}{M_o^{\mathrm{ali}}}A_t\mathbb I[t\in\mathcal A_o].
\label{eq:r2_normalized_hard_endpoint}
\end{equation}
Zero advantages remain zero at both endpoints.
\end{proof}

\paragraph{Boundary cases and scope.}
If all nonzero advantages are aligned or all are inconsistent,
$\kappa_o$ is identically zero or one, respectively.
The all-inconsistent case is excluded from
Eq.~\eqref{eq:r2_normalized_hard_endpoint}:
for $z=1$ and $A=(-1,-1)$, identical gates give $Z_o=1/g$,
so the coefficients remain $(-1,-1)$ for every finite $\beta$.
For $M_o=0$, the coefficients are zero and $\kappa_o$ is undefined.
Monotonicity applies to group mass, not individual coefficients,
and the hard endpoint is not generally equivalent to unnormalized masking.
Recovering the full vanilla update at $\beta=0$ additionally requires
identical data, masks, loss reduction, optimizer state, and settings.

\subsection{Proof of Proposition~\ref{prop:r2_task_progress}}
\label{app:r2_task_progress}

Assume $J$ is differentiable at $\bar\theta$ and define
$g_R=\nabla J(\bar\theta)$ and
$\psi_t=\left.\nabla_\theta\log\pi_\theta(o_t\mid s_t)
\right|_{\theta=\bar\theta}$.
Rollout expectations are finite and admit differentiation under the
expectation, with the rollout distribution held fixed.
Differentiating the detached log-probability surrogate in
Eq.~\eqref{eq:r2_loss} gives
\begin{equation}
 -\left.\nabla_\theta\mathcal L_{\mathrm{R2}}(\theta;\bar\theta)
 \right|_{\theta=\bar\theta}
 =\mathbb E\!\left[\sum_t A_t^{\mathrm{R2}}\psi_t\right]
 =\mathbb E[d_{\mathrm{R2}}(o)].
\label{eq:r2_on_policy_gradient}
\end{equation}
Replacing $A_t^{\mathrm{R2}}$ by $A_t$ gives
$\mathbb E[d_{\mathrm V}(o)]$.

For comparison, the probability ratio
$\rho_t(\theta)=\pi_\theta(o_t\mid s_t)/\pi_{\bar\theta}(o_t\mid s_t)$
satisfies
\begin{equation}
 \nabla_\theta\rho_t(\theta)
 =\rho_t(\theta)\nabla_\theta\log\pi_\theta(o_t\mid s_t),
 \qquad \rho_t(\bar\theta)=1.
\label{eq:r2_ratio_derivative}
\end{equation}
Replacing the log probability by this ratio gives the same gradient
at $\bar\theta$, but not generally away from the rollout parameters.

\begin{proof}
For the fixed complete response, recall
$p_o(t)=|A_t|/M_o$ and
$u_t=\operatorname{sgn}(A_t)\langle g_R,\psi_t\rangle$.
Here $p_o$ is a distribution over token positions.
Its mean gate satisfies
\begin{equation}
 \mu_g:=\mathbb E_{p_o}[g]
 =\frac{\sum_t|A_t|g_t}{M_o}=\frac{1}{Z_o}>0.
\label{eq:r2_mean_gate}
\end{equation}
Using $A_t=|A_t|\operatorname{sgn}(A_t)$, the standard OPD projection is
\begin{equation}
 \langle g_R,d_{\mathrm V}(o)\rangle
 =\sum_t A_t\langle g_R,\psi_t\rangle
 =M_o\mathbb E_{p_o}[u].
\label{eq:r2_vanilla_projection}
\end{equation}
Similarly,
\begin{equation}
 \langle g_R,d_{\mathrm{R2}}(o)\rangle
 =Z_o\sum_tg_t|A_t|u_t
 =\frac{M_o\mathbb E_{p_o}[gu]}{\mu_g}.
\label{eq:r2_reweighted_projection}
\end{equation}
Subtracting yields
\begin{align}
 \langle g_R,d_{\mathrm{R2}}(o)-d_{\mathrm V}(o)\rangle
 &=\frac{M_o}{\mu_g}
 \left(\mathbb E_{p_o}[gu]-\mu_g\mathbb E_{p_o}[u]\right)
 \nonumber\\
 &=\frac{M_o\operatorname{Cov}_{p_o}(g,u)}{\mathbb E_{p_o}[g]},
\label{eq:r2_covariance_proof}
\end{align}
which proves Eq.~\eqref{eq:r2_task_gain}.
Zero-advantage positions have zero $p_o$ mass.
\end{proof}

\paragraph{A sufficient reward--utility agreement condition.}
\label{app:r2_agreement}
Suppose $\mathcal A_o$ and $\mathcal C_o$ both carry positive mass,
and let $\alpha=p_o(\mathcal A_o)\in(0,1)$.
For $G\in\{\mathcal A_o,\mathcal C_o\}$, define
$\mu_{g,G}=\mathbb E_{p_o}[g\mid G]$,
$\mu_{u,G}=\mathbb E_{p_o}[u\mid G]$, and
$c_G=\operatorname{Cov}_{p_o}(g,u\mid G)$.
Conditioning on the two groups gives
\begin{align*}
 \mathbb E_{p_o}[gu]
 &=\alpha(c_{\mathcal A}+\mu_{g,\mathcal A}\mu_{u,\mathcal A})
 +(1-\alpha)(c_{\mathcal C}+\mu_{g,\mathcal C}\mu_{u,\mathcal C}),\\
 \mathbb E_{p_o}[g]
 &=\alpha\mu_{g,\mathcal A}+(1-\alpha)\mu_{g,\mathcal C},\\
 \mathbb E_{p_o}[u]
 &=\alpha\mu_{u,\mathcal A}+(1-\alpha)\mu_{u,\mathcal C}.
\end{align*}
Subtracting the product of the last two expressions from the first yields
\begin{equation}
\begin{aligned}
 \operatorname{Cov}_{p_o}(g,u)
 &=\alpha c_{\mathcal A}+(1-\alpha)c_{\mathcal C}\\
 &\quad+\alpha(1-\alpha)
 (\mu_{g,\mathcal A}-\mu_{g,\mathcal C})
 (\mu_{u,\mathcal A}-\mu_{u,\mathcal C}).
\end{aligned}
\label{eq:r2_group_covariance}
\end{equation}
For $\beta>0$, the gate satisfies
$\mu_{g,\mathcal A}>1/2>\mu_{g,\mathcal C}$.
Consequently, $\mu_{u,\mathcal A}>\mu_{u,\mathcal C}$ and
$\alpha c_{\mathcal A}+(1-\alpha)c_{\mathcal C}\geq0$
suffice for a strictly positive per-response projection gain.
More generally, the between-group term must outweigh any negative
within-group contribution.
These are additional utility conditions, not consequences of
outcome alignment alone.

\subsection{Relative Progress and Reward Ascent}
\label{app:r2_finite_step}

Let $d_{\mathrm V}=\mathbb E[d_{\mathrm V}(o)]$ and
$d_{\mathrm{R2}}=\mathbb E[d_{\mathrm{R2}}(o)]$ use the same
rollout distribution and reduction, and define
\begin{equation}
 \Delta_R=\langle g_R,d_{\mathrm{R2}}-d_{\mathrm V}\rangle,
 \qquad P_{\mathrm V}=\langle g_R,d_{\mathrm V}\rangle.
\label{eq:r2_expected_task_gain}
\end{equation}
By linearity, $\Delta_R=\mathbb E[\Delta_o]$, where $\Delta_o$ is
the right-hand side of Eq.~\eqref{eq:r2_task_gain} for complete
positive-mass responses and zero for zero-mass or identity-fallback
responses.

Suppose $\nabla J$ is $L_J$-Lipschitz, with $L_J>0$, on a
neighborhood containing both update segments.
For either fixed direction $d$ and step size $\eta>0$,
\begin{equation}
 J(\bar\theta+\eta d)-J(\bar\theta)-\eta\langle g_R,d\rangle
 =\int_0^\eta\langle\nabla J(\bar\theta+td)-g_R,d\rangle\,dt.
\label{eq:r2_taylor_integral}
\end{equation}
Cauchy--Schwarz and Lipschitz continuity bound the absolute remainder
by $L_J\eta^2\|d\|_2^2/2$.
Applying the lower bound to $d_{\mathrm{R2}}$ and the upper bound
to $d_{\mathrm V}$ gives
\begin{equation}
\begin{aligned}
 &J(\bar\theta+\eta d_{\mathrm{R2}})-J(\bar\theta+\eta d_{\mathrm V})\\
 &\quad\geq\eta\Delta_R-\frac{L_J\eta^2}{2}
 \left(\|d_{\mathrm{R2}}\|_2^2+\|d_{\mathrm V}\|_2^2\right).
\end{aligned}
\label{eq:r2_relative_reward_bound}
\end{equation}
If $\Delta_R>0$, relative reward improvement follows whenever both
update segments remain in the stated neighborhood and
\[
 0<\eta<\frac{2\Delta_R}
 {L_J\left(\|d_{\mathrm{R2}}\|_2^2+\|d_{\mathrm V}\|_2^2\right)}.
\]

Reward ascent from the original policy is a separate condition.
Since $\langle g_R,d_{\mathrm{R2}}\rangle=P_{\mathrm V}+\Delta_R$,
\begin{equation}
 J(\bar\theta+\eta d_{\mathrm{R2}})-J(\bar\theta)
 \geq\eta(P_{\mathrm V}+\Delta_R)
 -\frac{L_J\eta^2}{2}\|d_{\mathrm{R2}}\|_2^2.
\label{eq:r2_absolute_reward_bound}
\end{equation}
If $P_{\mathrm V}+\Delta_R>0$, this guarantees local ascent for
\[
 0<\eta<\frac{2(P_{\mathrm V}+\Delta_R)}
 {L_J\|d_{\mathrm{R2}}\|_2^2}.
\]
In particular, overcoming a negative baseline projection
$P_{\mathrm V}<0$ at first order requires $\Delta_R>-P_{\mathrm V}$,
not merely $\Delta_R>0$.
These bounds concern the specified deterministic directions,
not arbitrary stochastic or adaptive-optimizer updates.

\subsection{Gradient Compatibility and Task Progress}
\label{app:r2_compatibility}

On a fixed batch of complete responses, define the aligned OPD loss
by restricting Eq.~\eqref{eq:opd_surrogate} to $z(q,o)A_t>0$.
Let $G_{\mathrm{ali}}=\nabla\mathcal L_{\mathrm{ali}}(\bar\theta)$ and
$G_{\mathrm{R2}}=\nabla\mathcal L_{\mathrm{R2}}(\bar\theta)$ use the
same reduction.
If $\mathcal L_{\mathrm{ali}}$ is $L_{\mathrm{ali}}$-smooth along
the update segment, then
\begin{equation}
\begin{aligned}
 &\mathcal L_{\mathrm{ali}}(\bar\theta-\eta G_{\mathrm{R2}})
 -\mathcal L_{\mathrm{ali}}(\bar\theta)\\
 &\quad\leq-\eta\langle G_{\mathrm{ali}},G_{\mathrm{R2}}\rangle
 +\frac{L_{\mathrm{ali}}\eta^2}{2}\|G_{\mathrm{R2}}\|_2^2.
\end{aligned}
\label{eq:r2_actual_compatibility}
\end{equation}
A positive inner product therefore permits aligned-surrogate descent
for a sufficiently small step, but does not establish reward ascent.
Likewise, coefficient-mass preservation does not fix update magnitude:
\begin{equation}
 \|d_{\mathrm{R2}}(o)\|_2^2
 =\sum_{i,j}A_i^{\mathrm{R2}}A_j^{\mathrm{R2}}
 \langle\psi_i,\psi_j\rangle.
\label{eq:r2_gram_norm}
\end{equation}
The cross-token inner products determine how gradient contributions
reinforce or cancel one another.

\section{Implementation Conventions}
\label{app:r2_implementation}

\paragraph{Fallback and masking.}
\label{app:r2_truncation}
Responses terminated by the length limit retain
$A_t^{\mathrm{R2}}=A_t$; their outcomes are not used for reweighting.
This fallback preserves standard OPD supervision rather than discarding
responses.
Prompt and padding positions are excluded from the scale, mass, and loss;
empty responses contribute no token loss.
Complete zero-mass responses retain zero coefficients with $Z_o=1$.

\paragraph{Relation to the analyzed surrogate.}
\label{app:r2_policy_loss}

For a fixed batch, all coefficients, masks, response lengths, and snapshot
log probabilities are detached.
Under consistent trainer evaluations, $\rho_{i,t}(\bar\theta)=1$
and $\left.\nabla_\theta\rho_{i,t}(\theta)\right|_{\bar\theta}
=\psi_{i,t}$.
Since the ratio lies inside the clipping interval,
Eq.~\eqref{eq:impl_actual_policy_loss} has the rollout-point gradient
\begin{equation}
 -\left.\nabla_\theta\widehat{\mathcal L}_{\mathrm{impl}}
 \right|_{\bar\theta}
 =\frac{1}{N}\sum_{i=1}^{N}\frac{1}{\widetilde n_i}
   \sum_t m_{i,t}A_{i,t}^{\mathrm{R2}}\psi_{i,t}.
\label{eq:impl_rollout_gradient_bridge}
\end{equation}
This equals the gradient direction of the detached log-probability
surrogate with the same per-response reduction.
The equivalence is exact under the stated forward-evaluation assumption;
separate floating-point passes may introduce numerical deviations.

For $X\in\{\mathrm V,\mathrm{R2}\}$, let
$\widehat d_X^{\mathrm{resp}}=N^{-1}\sum_i
 d_X(o_i)/\widetilde n_i$, with the per-response directions defined in
Section~\ref{subsec:r2_task_progress} and zero direction for empty responses.
Applying Eq.~\eqref{eq:r2_task_gain} response by response gives
\begin{equation}
 \left\langle g_R,
   \widehat d_{\mathrm{R2}}^{\mathrm{resp}}
   -\widehat d_{\mathrm V}^{\mathrm{resp}}\right\rangle
 =\frac{1}{N}\sum_{i\in\mathcal I}
   \frac{M_{o_i}}{n_i}
   \frac{\operatorname{Cov}_{p_{o_i}}(g,u)}
        {\mathbb E_{p_{o_i}}[g]},
\label{eq:impl_response_weighted_task_gain}
\end{equation}
where $\mathcal I$ indexes complete, positive-mass responses;
identity-fallback responses have zero difference.
Thus, response averaging preserves the within-response allocation
identity while introducing inverse-length weights in batch aggregation.
Population directions accordingly use
$\mathbb E[d_X(o)/\widetilde n_o]$ rather than $\mathbb E[d_X(o)]$.

The reverse-KL gradient identity in
Eq.~\eqref{eq:opd_gradient_consistency} concerns the token-summed
analysis surrogate. It does not automatically extend to inverse-length
weighting, since realized response length depends on sampled actions.
The coefficient identities and the fixed-batch gradient relation above
remain valid independently of that expectation-level identity.
These relations describe the policy-loss gradient before global norm
clipping and AdamW; finite-step guarantees for direct gradient updates
are not guarantees for arbitrary adaptive-optimizer displacements.

\paragraph{Exact normalization.}
\label{app:r2_numerics}
The scale floor in $\nu_o$ preserves the coefficient identities,
whereas adding a positive constant to the mass denominator does not.
For $M_o>0$, $D_o=\sum_tg_t|A_t|$, and
$Z_o=M_o/(D_o+\epsilon_{\mathrm{den}})$ with $\epsilon_{\mathrm{den}}>0$,
\[
 \sum_t|A_t^{\mathrm{R2}}|
 =\frac{M_o}{D_o+\epsilon_{\mathrm{den}}}\sum_tg_t|A_t|
 =\frac{M_oD_o}{D_o+\epsilon_{\mathrm{den}}}<M_o.
\]
This modification breaks exact mass conservation and the vanilla
endpoint at $\beta=0$.
Strict gate positivity holds in exact arithmetic; floating-point
saturation and underflow require separate numerical care.

\end{document}